\newtheorem{lemma}{Lemma}
\newtheorem{theorem}{Theorem}
\newtheorem{definition}{Definition}
\newcommand{\Rmnum}[1]{\expandafter\@slowromancap\romannumeral #1@}
\DeclareMathOperator*{\argmin}{arg\,min}
\DeclareMathOperator*{\argmax}{arg\,max}
\newcommand{\expct}[1]{\mathbb{E}\left[#1\right]}
\newcommand{\expctover}[2]{\mathbb{E}_{#1}\!\left[#2\right]}
\newcommand{\abs}[1]{\left\vert#1\right\vert}
\def \argmax {\mathop{\rm arg\,max}}
\def \argmin {\mathop{\rm arg\,min}}
\newcommand{\one}{\mathbf{1}}
\newcommand{\bs}{{\mathbf{s}}}
\newcommand{\norm}[1]{\left\lVert#1\right\rVert}
\newcommand{\bcc}[1]{\left\{{#1}\right\}}
\newcommand{\brr}[1]{\left({#1}\right)}
\newcommand{\bss}[1]{\left[{#1}\right]}
\title{Environment Shaping in Reinforcement Learning\\using State Abstraction}
\author{
    Parameswaran Kamalaruban\\
    LIONS, EPFL\\
	\And
    Rati Devidze\\
    MPI-SWS\\
	\And
    Volkan Cevher\\
    LIONS, EPFL\\
    \And
     Adish Singla\\
     MPI-SWS\\
}
\begin{document}
\maketitle

\begin{abstract}
\looseness-1
One of the central challenges faced by a reinforcement learning (RL) agent is to effectively learn a (near-)optimal policy in environments with large state spaces having sparse and noisy feedback signals. In real-world applications, an expert with additional domain knowledge can help in speeding up the learning process via \emph{shaping the environment}, i.e., making the environment more learner-friendly.  A popular paradigm in literature is \emph{potential-based reward shaping}, where the environment's reward function is augmented with additional local rewards using a potential function. However, the applicability of potential-based reward shaping is limited in settings where (i) the state space is very large, and it is challenging to compute an appropriate potential function, (ii) the feedback signals are noisy, and even with shaped rewards the agent could be trapped in local optima, and (iii) changing the rewards alone is not sufficient, and effective shaping requires changing the dynamics. We address these limitations of potential-based shaping methods and propose a novel framework of \emph{environment shaping using state abstraction}. Our key idea is to compress the environment's large state space with noisy signals to an abstracted space, and to use this abstraction in creating smoother and more effective feedback signals for the agent. We study the theoretical underpinnings of our abstraction-based environment shaping, and show that the agent's policy learnt in the shaped environment preserves near-optimal behavior in the original environment.
\end{abstract}
\section{Introduction}\label{sec:intro}

Recently, Reinforcement Learning (RL) algorithms~\cite{sutton2000policy,sutton2018reinforcement} have demonstrated tremendous success in complex games~\cite{mnih2015human,silver2017mastering} and simulation environments~\cite{lillicrap2015continuous}. However, in general, their deployment in real-world applications is hindered by high sample complexity especially in the domains with sparse or noisy feedback signals and large or continuous state space. In many applications, an expert with additional domain knowledge (e.g., see ~\cite{ng1999policy,randlov2000shaping,brown2019machine,haug2018teaching,parameswaran2019interactive}) can help speed up the learning process by \emph{shaping the environment}. Here, the expert modifies the original environment such that the new environment is more learner-friendly, i.e., the learning agent can learn an optimal policy that maximizes discounted cumulative reward in a sample efficient manner (or within a practical amount of time). But the expert needs to ensure that the (near-)optimal policy learned in the shaped environment is near-optimal in the original environment as well, i.e., the optimal value function is preserved under the transformation. 

Potential-based reward shaping \cite{colombetti1994training,mataric1994reward,ng1999policy,wiewiora2003potential,devlin2012dynamic,asmuth2008potential} is a technique where the expert modifies the reward function of the learning environment, using a potential function so that the learning progress of the RL agent is accelerated. Existing works on reward shaping typically rely on an appropriate potential function, such as (approximate) optimal value function $V^*$, which is computationally challenging to estimate in large state spaces. When the feedback signals are noisy, even the shaped rewards cannot help the RL agent to escape the local optimal behaviors. Even though reward shaping is the most widely used technique to manipulate the environment, it can also be accomplished by changing the transition dynamics \cite{randlov2000shaping}. And there are instances where specific goals can only be achieved by changing the physics of the problem. 

To address the above limitations, we propose a new environment shaping framework, which is scalable to large or continuous state space, and is fundamentally different from the potential-based shaping approaches. In our framework, depending on the application setting, we have the flexibility of shaping: (i) only the reward, (ii) only the transition dynamics, or (iii) both reward and transition dynamics.

Our shaping framework relies on a value-irrelevant (formally defined later) state abstraction $\phi$ for the original environment $M$, and the corresponding (optimal) abstract policy $\pi_\phi$. This still requires domain knowledge; however, this is less knowledge (or maybe easier to acquire) than fully specifying a potential function. Given access to $(\phi, \pi_\phi)$, the expert employs the following pipeline: (a) first, shape the reward and/or transition dynamics in the abstract state space and construct an intermediate MDP $M_\phi$ by leveraging techniques from  \cite{ma2019policy,rakhsha2020policy} (b) then, from $M_\phi$, obtain the final shaped environment by appropriate \emph{lifting} (formalized later). 

Our main contributions include:
\begin{enumerate}[I]
\item We propose a novel framework for environment shaping in the domains with large or continuous state space and sparse or noisy reward signals. 
\item Under mild technical conditions, we prove that the optimal policy in the shaped environment maintains its optimality in the original environment. The optimality guarantee depends on the abstraction quality, and other MDP related quantities.
\item We demonstrate the advantages of our shaping framework compared to the potential-based reward shaping methods on domains with sparse and noisy reward and large state space.
\end{enumerate}

Our techniques can be extended to another related problem of training-time adversarial attacks in RL, where existing algorithmic efforts are limited to finite state space, and do not scale when state space is continuous or large \cite{ma2019policy,rakhsha2020policy,zhang2020adaptive}. This extension is possible due to the perspective that policy teaching and attacking are mathematically equivalent. Here, the expert modifies the learning environment to induce desirable behavior in the RL agent \cite{zhang2008value}.



\section{Problem Statement}\label{sec:model}

In this section, we formalize the problem of shaping an RL agent's learning environment such that its learning process can be accelerated. 

\paragraph{Setup.} 

Consider a learning environment represented by an MDP $M = \brr{\mathcal{S}, \mathcal{A}, T, R, \mu_0, \gamma}$. The state and action spaces are denoted by $\mathcal{S}$ and $\mathcal{A}$ respectively. Here, $\mathcal{S}$ is either finite or continuous, but $\mathcal{A}$ is finite. The transition function $T: \mathcal{S} \times \mathcal{S} \times \mathcal{A} \rightarrow \bss{0,1}$ is a probability mass function (or density function when $\mathcal{S}$ is continuous). The probability of landing in a state $s' \in \mathcal{S}$ (or any state in the region $\mathcal{S}' \subseteq \mathcal{S}$ when $\mathcal{S}$ is continuous) by taking action $a$ from state $s$, induced by $T$, is given by:
\begin{align}
\mathbb{P}_\mathrm{disc} \bss{s_{t+1} = s' \mid s_t = s, a_t = a ; T} ~=~& T\brr{s' \mid s , a} \text{ or } \label{transition-probs-1} \\
\mathbb{P}_\mathrm{cts} \bss{s_{t+1} \in \mathcal{S}' \mid s_t = s, a_t = a ; T} ~=~& \int_{\mathcal{S}'} T\brr{s' \mid s , a} ds' , \label{transition-probs-2}
\end{align}
respectively for discrete and continuous $\mathcal{S}$. The underlying reward function is given by $R: \mathcal{S} \times \mathcal{A} \rightarrow \bss{0,R_{\mathrm{max}}}$. Here $\gamma \in \brr{0,1}$ is the discounting factor, and $\mu_0: \mathcal{S} \rightarrow \bss{0,1}$ induces an initial distribution over the state space $\mathcal{S}$ (in similar way as Eqs.~\eqref{transition-probs-1}~and~\eqref{transition-probs-2}).  

We denote a policy $\pi: \mathcal{S} \rightarrow \Delta \brr{\mathcal{A}}$ as a mapping from a state to probability distribution over the action space. For any policy $\pi$, the state value function $V^\pi_M$ and the action value function $Q^\pi_M$ in the MDP $M$ are defined as follows respectively: 
\[
V^\pi_M \brr{s} = \expct{\sum_{t=0}^\infty \gamma^t r_t \mid s_0 =s, T, \pi} \quad \text{and} \quad Q^\pi_M \brr{s,a} = \expct{\sum_{t=0}^\infty \gamma^t r_t \mid s_0 =s, a_0 = a, T, \pi}. 
\]
Further the optimal value functions are given by $V^*_M\brr{s} = \max_\pi V^\pi_M \brr{s}$ and $Q^*_M \brr{s,a} = \max_\pi Q^\pi_M \brr{s,a}$. There always exists a deterministic stationary policy $\pi$ that achieves the optimal value function simultaneously for all $s \in \mathcal{S}$ \cite{puterman2014markov}, and we denote all such optimal policies by $\pi^*_M: \mathcal{S} \rightarrow \mathcal{A}$. We say that a policy $\pi$ is $\epsilon_{\mathrm{opt}}$-near optimal policy for the MDP $M$, if it satisfies the following condition:
\begin{equation}
\label{eq:near-opt-policy}
\sup_s \abs{V^*_M \brr{s} - V^{\pi}_M \brr{s}} ~\leq~  \epsilon_{\mathrm{opt}} .
\end{equation}

\paragraph{Objective.} 

When the state space is extremely large or continuous, and the reward function is sparse or noisy, learning a near optimal policy is computationally expensive. We aim to accelerate the learning of near optimal behavior, by shaping the reward function and/or the transition dynamics of the underlying MDP $M$. Formally, we want to construct a new shaped MDP $\overline{M} = \brr{\mathcal{S}, \mathcal{A}, \overline T, \overline R, {\overline{\mu}}_0, \gamma}$ from $M$ such that:
\begin{enumerate}
    \item any $\epsilon_\mathrm{opt}$-near optimal policy $\pi_{\overline{M}}: \mathcal{S} \rightarrow \Delta \brr{\mathcal{A}}$ of $\overline{M}$ satisfies the following condition:  
    \begin{equation}
        \label{eq:main-teach-obj}
        \sup_s \abs{V^*_M \brr{s} - V^{\pi_{\overline{M}}}_M \brr{s}} ~\leq~ c \cdot \epsilon_\mathrm{opt} ,
    \end{equation}
    for some $c > 0$, i.e., $\pi_{\overline{M}}$ is $c \cdot \epsilon_\mathrm{opt}$-near optimal policy for $M$, and
    \item learning a near-optimal policy $\pi_{\overline{M}}: \mathcal{S} \rightarrow \Delta \brr{\mathcal{A}}$ in the MDP $\overline{M}$ is (computationally) easier than learning a near optimal behavior in the original MDP $M$.
\end{enumerate}

\section{Our Approach}\label{sec:algorithm}

In this section, we introduce our environment shaping framework for easing the learning process of an RL agent. Here, we have the flexibility of modifying both the reward function and the transition dynamics of the learning environment. In Section~\ref{sub-sec:high-level}, we provide a high-level overview of our shaping framework and the domain knowledge required. Then, in Sections~\ref{sec:mphi}~and~\ref{sec:barm}, we get into the technical details of the pipeline and provide a theoretical guarantee on our overall objective.

\begin{figure}[htp]
    \centering
    \includegraphics[height=3cm]{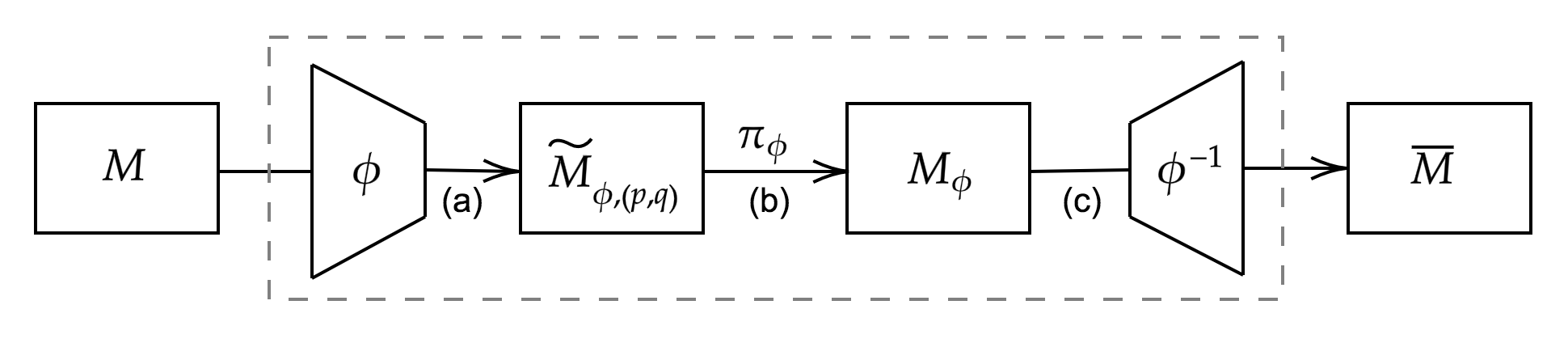}
    \caption{An illustration of our environment shaping framework: it takes the learning environment $M$ as input, and outputs the shaped environment $\overline{M}$. The framework uses an abstraction $\phi$, and a high-level policy $\pi_\phi$ as prior knowledge. The steps (a) and (b) of the pipeline are explained in Section~\ref{sec:mphi}, and step (c) in Section~\ref{sec:barm}.}
    \label{fig:shaping-framework}
\end{figure}

\subsection{High-level Ideas}
\label{sub-sec:high-level}

Here, we provide the high-level ideas of our environment shaping framework. Our approach leverages some of the techniques from state abstraction literature~\cite{givan2003equivalence,ravindran2004approximate,li2006towards,abel2016near}. In particular, we rely on compressed knowledge of the learning environment in the form of a high-level near-optimal policy to facilitate the RL agent's learning process. In contrast to potential-based shaping techniques, which require access to an appropriate potential function such as the exact/approximate optimal value function of the original learning environment, the domain knowledge demanded by our framework is practically feasible to obtain.

To formally explain the key ideas of our abstraction based shaping framework and the domain knowledge required for us, we begin by defining the following notions of approximate abstractions:
\begin{definition}[\cite{abel2016near,nanjiangnotes}]
Let $\mathcal{X}_\phi$ be a finite set. Given an MDP $M = \brr{\mathcal{S}, \mathcal{A}, T, R, \mu_0, \gamma}$ and state abstraction $\phi: \mathcal{S} \rightarrow \mathcal{X}_\phi$, we define three types of abstractions as follows:
\begin{enumerate}[(i)]
    \item $\phi$ is $\brr{\epsilon_R , \epsilon_T}$-approximate model irrelevant if $\forall{s_1, s_2 \in \mathcal{S}}$ where $\phi\brr{s_1} = \phi\brr{s_2}$, we have, $\forall{a \in \mathcal{A}}$:
    \begin{equation}
        \label{eq:model-inv}
        \abs{R\brr{s_1, a} - R\brr{s_2, a}} \leq \epsilon_R \quad \text{and} \quad \sum_{x' \in \mathcal{X}_\phi} \abs{T\brr{x' \mid s_1, a} - T\brr{x' \mid s_2, a}} \leq \epsilon_T ,
    \end{equation}
    where $T\brr{x' \mid s, a} := \int_\mathcal{S} T\brr{s' \mid s, a} \one\bcc{\phi\brr{s'} = x'} ds'$ for $x' \in \mathcal{X}_\phi$, $s \in \mathcal{S}$, and $a \in \mathcal{A}$.  
    \item $\phi$ is $\epsilon_{Q^*}$-approximate $Q^*$-irrelevant if there exists an abstract state-action value function $f_\phi: \mathcal{X}_\phi \times \mathcal{A} \rightarrow \mathbb{R}$ such that the following condition holds:
    \begin{equation}
        \label{eq:qval-inv}
        \sup_{s,a} \abs{Q^*_M \brr{s,a} - f_\phi \brr{\phi\brr{s},a}} ~\leq~ \epsilon_{Q^*} . 
    \end{equation}
    \item $\phi$ is $\epsilon_{V^*}$-approximate $V^*$-irrelevant if there exists an abstract policy $\pi_\phi: \mathcal{X}_\phi \rightarrow \mathcal{A}$ such that the following condition holds: 
    \begin{equation}
    \label{eq:given-policy-phi}
    \sup_s \abs{V^*_M \brr{s} - V^{\bss{\pi_{\phi}}_M}_M \brr{s}} ~\leq~ \epsilon_{V^*} ,
    \end{equation}
    where the lifted policy $\bss{\pi_{\phi}}_M: \mathcal{S} \rightarrow \mathcal{A}$ is defined as $\bss{\pi_{\phi}}_M \brr{s} := \pi_{\phi}\brr{\phi\brr{s}}$.
\end{enumerate}
\end{definition}

Formally, as a prior knowledge, our shaping framework requires access to an $\epsilon_{V^*}$-approximate $V^*$-irrelevant abstraction $\phi: \mathcal{S} \rightarrow \mathcal{X}_\phi$ (where $\mathcal{X}_\phi$ is a discrete set), and a corresponding policy $\pi_\phi: \mathcal{X}_\phi \rightarrow \mathcal{A}$ that satisfies the condition~\eqref{eq:given-policy-phi}. In Section~\ref{sec:practocal-changes}, we discuss the ways of obtaining such an abstraction, policy pair $\brr{\phi , \pi_\phi}$. Given this pair as an input, our abstraction-based shaping framework consists of the following two key steps:
\begin{enumerate}[(i)]
    \item First, starting from the tuple $\brr{M, \phi , \pi_\phi}$, we construct an intermediate MDP $M_\phi = \brr{\mathcal{X}_\phi, \mathcal{A}, T_\phi, R_\phi, \brr{\mu_0}_\phi, \gamma}$ such that the policy $\pi_\phi$ is the unique optimal policy for $M_\phi$. In addition, we ensure that $\pi_\phi$ is $\delta$ better than any other policy in the MDP $M_\phi$. This step makes our shaping technique fundamentally different from the potential-based shaping methods. We formalize this construction in Section~\ref{sec:mphi}.
    \item Then, using the intermediate MDP $M_\phi$, and the abstraction $\phi$, we construct our target MDP $\overline M$ such that the mapping $\phi$ is model irrelevant to $\overline M$. From this construction, we can show that any $\epsilon_\mathrm{opt}$-near optimal policy in $\overline M$ is also near optimal in the original MDP $M$. This construction is explained in Section~\ref{sec:barm}. 
\end{enumerate}
Our overall framework is illustrated in Figure~\ref{fig:shaping-framework}. 

\subsection{Construction of $M_\phi$}
\label{sec:mphi}

In this intermediate step, we aim to construct an MDP $M_\phi$, defined in the abstract space, that satisfies certain \emph{technical conditions} formally explained below. These conditions are essential to preserving the optimality behavior between the original learning environment and our final shaped environment. We note that this intermediate step along with these conditions distinguishes our shaping framework from the potential-based shaping methods.    

In order to formalize the preferred properties of the MDP $M_\phi$, we first introduce some distance measures between the components of the MDP $M_\phi$ and the original MDP $M$, for $p,q \geq 0$:
\begin{align*}
D_{p,q} \brr{T, {T}_\phi} ~:=~& \bss{\int_\mathcal{S} \sum_a \bcc{\bss{\int_{\mathcal{S}} \abs{T\brr{s' \mid s,a} - \frac{{T}_\phi \brr{\phi\brr{s'} \mid \phi\brr{s},a}}{\int_\mathcal{S} \one \bcc{\phi\brr{\bar s} = \phi \brr{s'}} d\bar{s}}}^q ds'}^{\frac{1}{q}}}^p ds}^{\frac{1}{p}} , \\
D_p \brr{R, {R}_\phi} ~:=~& \bss{\int_\mathcal{S} \sum_a {\abs{R\brr{s,a} - {R}_\phi \brr{\phi\brr{s},a}}}^p ds}^{\frac{1}{p}} , \text{ and } \\
D_p \brr{\mu_0, \brr{\mu_0}_\phi} ~:=~& \bss{\int_\mathcal{S} {\abs{\mu_0\brr{s} - \brr{\mu_0}_\phi \brr{\phi\brr{s}}}}^p ds}^{\frac{1}{p}} ,
\end{align*}
where ${T}_\phi$, ${R}_\phi$, and $\brr{\mu_0}_\phi$ belong to the following sets:
\begin{align*}
\mathcal{T}_\phi ~:=~& \bcc{{T}_\phi:\mathcal{X}_\phi \times \mathcal{X}_\phi \times \mathcal{A} \rightarrow \bss{0,1} \text{ s.t. } {T}_\phi \brr{\cdot \mid x,a} \in \Delta^{\abs{\mathcal{X}_\phi}}, \forall{x \in \mathcal{X}_\phi , a \in \mathcal{A}}} , \\
\mathcal{R}_\phi ~:=~& \bcc{{R}_\phi:\mathcal{X}_\phi \times \mathcal{A} \rightarrow \mathbb{R}} , \text{ and } \\
\brr{\mathcal{P}_0}_\phi ~:=~& \bcc{\brr{\mu_0}_\phi:\mathcal{X}_\phi  \rightarrow \bss{0,1} \text{ s.t. } \brr{\mu_0}_\phi \brr{\cdot} \in \Delta^{\abs{\mathcal{X}_\phi}}}
\end{align*}
respectively. Specifically, we are interested in the case where $p \to \infty$, and $q=1$. These distance measures quantify the level of modification done to the original learning environment. Based on the above definitions, we formalize the properties that we seek to ensure while constructing the MDP $M_\phi$. Given $\brr{M,\phi, \pi_\phi}$, we aim to construct a new MDP $M_\phi = \brr{\mathcal{X}_\phi, \mathcal{A}, T_\phi, R_\phi, \brr{\mu_0}_\phi, \gamma}$ such that the following conditions holds:
\begin{enumerate}[(i)]
    \item $D_{p,q} \brr{T, {T}_\phi}$, $D_p \brr{R, {R}_\phi}$, and $D_p \brr{\mu_0, \brr{\mu_0}_\phi}$ are minimized, i.e., we enforce the new MDP $M_\phi$ to not to change too much from the original MDP $M$.
    \item $M_\phi$ has a unique deterministic optimal policy $\pi^*_{M_\phi}$, and it satisfies:
    \begin{enumerate}
        \item $\pi_\phi \brr{x} = \pi^*_{M_\phi} \brr{x} = \argmax_\pi V^\pi_{M_\phi}\brr{x}$ for all $x \in \mathcal{X}_\phi$.
        \item any other deterministic policy in $M_\phi$ cannot be $\delta$-near optimal in the following sense:
        \begin{equation}
        \label{eq:unique-optimal-phi}
        Q^*_{M_\phi} \brr{x, \pi_\phi\brr{x}} ~\geq~ Q^*_{M_\phi} \brr{x, a'} + \delta , \quad \forall{a' \neq \pi_\phi\brr{x} , x \in \mathcal{X}_\phi} .
        \end{equation}
    \end{enumerate}
\end{enumerate}


Now, we sketch a way of obtaining such an MDP $M_\phi$. First, we construct an intermediate MDP ${\widetilde{M}}_{\phi,\brr{p,q}} = \brr{\mathcal{X}_\phi, \mathcal{A}, T_{\phi,\brr{p,q}}, R_{\phi,p}, \brr{\mu_0}_{\phi,p}, \gamma}$ as follows:
\begin{align*}
T_{\phi,\brr{p,q}} ~\gets~& \argmin_{{T}_\phi \in \mathcal{T}_\phi} D_{p,q} \brr{T, {T}_\phi} , \\
R_{\phi,p} ~\gets~& \argmin_{{R}_\phi \in \mathcal{R}_\phi} D_p \brr{R, {R}_\phi}, \text{ and } \\
\brr{\mu_0}_{\phi,p} ~\gets~& \argmin_{\brr{\mu_0}_\phi \in \brr{\mathcal{P}_0}_\phi} D_p \brr{\mu_0, \brr{\mu_0}_\phi} . \end{align*}
After obtaining ${\widetilde{M}}_{\phi,\brr{p,q}}$, we can construct our target $M_\phi$ by leveraging techniques from~\cite{ma2019policy,rakhsha2020policy,zhang2020adaptive}.


\subsection{Construction of $\overline M$}
\label{sec:barm}


Now, we turn to our final step of constructing our target MDP $\overline M$, which can be viewed as a decompression step. In this case, we assume that both the reward and the transition dynamics can be modified. Then, given the abstraction $\phi$, and the MDP $M_\phi$ constructed in Section~\ref{sec:mphi}, we construct a shaped MDP $\overline{M} = \brr{\mathcal{S}, \mathcal{A}, \overline T, \overline R, \overline \mu_0, \gamma}$ as follows:
\begin{enumerate}
\item $\overline{T} \brr{s' \mid s,a} = \frac{T_\phi \brr{\phi\brr{s'} \mid \phi\brr{s},a}}{\int_{\mathcal{S}} \one \bcc{\phi\brr{\tilde{s}} = \phi\brr{s'}} d\tilde{s}}$, for all $s' \in \mathcal{S}, s \in \mathcal{S}, a \in \mathcal{A}$
\item $\overline{R} \brr{s,a} = R_\phi \brr{\phi\brr{s},a}$, for all $s \in \mathcal{S}, a \in \mathcal{A}$
\item $\overline{\mu}_0 \brr{s} = \brr{\mu_0}_\phi \brr{\phi\brr{s}}$, for all $s \in \mathcal{S}$
\end{enumerate}

Finally, we want to ensure that our teaching objective, mentioned in Section~\ref{sec:model}, can be attained by this shaped environment $\overline M$. To this end, the following theorem shows that the agent's policy learnt in the shaped environment $\overline M$ preserves near-optimal behavior in the original environment $M$. Then, in Section~\ref{sec:experiments}, we empirically demonstrate that our shaping framework indeed helps in accelerating the learning process of the RL agent. 

\begin{theorem}
\label{thm:main-change-T}
Any $\epsilon_\mathrm{opt}$-near optimal policy $\pi_{\overline{M}}: \mathcal{S} \rightarrow \Delta\brr{\mathcal{A}}$ of $\overline{M}$, satisfies the following condition:  
\begin{equation}
    \label{eq:teaching-objective}
    \sup_s \abs{V^*_M \brr{s} - V^{\pi_{\overline{M}}}_M \brr{s}} ~\leq~ \epsilon'_\mathrm{opt} ~:=~ \epsilon_{V^*} + \frac{R_{\mathrm{max}}}{1 - \gamma} \cdot \frac{\epsilon_\mathrm{opt}}{\delta} .
\end{equation}
When $\epsilon_{V^*} \leq \frac{R_{\mathrm{max}}}{1 - \gamma} \cdot \frac{\epsilon_\mathrm{opt}}{\delta}$, we attain our teaching objective Eq.~\eqref{eq:main-teach-obj} with $c = \frac{R_{\mathrm{max}}}{1 - \gamma} \cdot \frac{2}{\delta}$.
\end{theorem}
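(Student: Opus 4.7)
The plan is to decompose the error $\sup_s \abs{V^*_M\brr{s} - V^{\pi_{\overline{M}}}_M\brr{s}}$ into two pieces by routing through the lifted abstract policy $\bss{\pi_\phi}_M$. By the triangle inequality,
\[
\sup_s \abs{V^*_M\brr{s} - V^{\pi_{\overline{M}}}_M\brr{s}} \leq \sup_s \abs{V^*_M\brr{s} - V^{\bss{\pi_\phi}_M}_M\brr{s}} + \sup_s \abs{V^{\bss{\pi_\phi}_M}_M\brr{s} - V^{\pi_{\overline{M}}}_M\brr{s}}.
\]
The first summand is at most $\epsilon_{V^*}$ directly from the $V^*$-irrelevance hypothesis~\eqref{eq:given-policy-phi}, so the remainder of the argument concentrates on the second summand, which should be responsible for the $\frac{R_{\mathrm{max}}}{1-\gamma}\cdot\frac{\epsilon_\mathrm{opt}}{\delta}$ term.

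The structural observation I would establish first is that, by the construction of $\overline{M}$ in Section~\ref{sec:barm}, both $\overline{R}$ and $\overline{T}$ depend on $s$ only through $\phi\brr{s}$, so $\phi$ is exactly model-irrelevant for $\overline{M}$ and the induced abstract MDP coincides with $M_\phi$. This yields $V^*_{\overline{M}}\brr{s} = V^*_{M_\phi}\brr{\phi\brr{s}}$ and $Q^*_{\overline{M}}\brr{s,a} = Q^*_{M_\phi}\brr{\phi\brr{s},a}$, and in particular $\pi^*_{\overline{M}}\brr{s} = \pi_\phi\brr{\phi\brr{s}} = \bss{\pi_\phi}_M\brr{s}$. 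The strict-optimality condition~\eqref{eq:unique-optimal-phi} then lifts to the pointwise gap $V^*_{\overline{M}}\brr{s} - Q^*_{\overline{M}}\brr{s,a} \geq \delta$ for every $s$ and every $a \neq \bss{\pi_\phi}_M\brr{s}$.

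Combining this gap with the $\epsilon_\mathrm{opt}$-near-optimality of $\pi_{\overline{M}}$ in $\overline{M}$, I would extract a per-state bound on how much probability $\pi_{\overline{M}}$ can place on suboptimal actions. Using $V^{\pi_{\overline{M}}}_{\overline{M}}\brr{s} \leq \sum_a \pi_{\overline{M}}\brr{a\mid s}\, Q^*_{\overline{M}}\brr{s,a}$ together with $V^*_{\overline{M}}\brr{s} = Q^*_{\overline{M}}\brr{s,\bss{\pi_\phi}_M\brr{s}}$ gives
\[
\epsilon_\mathrm{opt} \geq V^*_{\overline{M}}\brr{s} - V^{\pi_{\overline{M}}}_{\overline{M}}\brr{s} \geq \delta \sum_{a \neq \bss{\pi_\phi}_M\brr{s}} \pi_{\overline{M}}\brr{a\mid s},
\]
so at every state $\pi_{\overline{M}}$ agrees with $\bss{\pi_\phi}_M$ with probability at least $1 - \epsilon_\mathrm{opt}/\delta$.

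The final and most delicate step is to convert this pointwise action-agreement bound into an $\ell_\infty$ bound on the value gap inside the \emph{original} MDP $M$. I would subtract the Bellman equations for $V^{\bss{\pi_\phi}_M}_M$ and $V^{\pi_{\overline{M}}}_M$ in $M$, split the right-hand side into an agreement contribution (which collapses into a recursive $\gamma$-scaled copy of $\sup_s \abs{V^{\bss{\pi_\phi}_M}_M\brr{s} - V^{\pi_{\overline{M}}}_M\brr{s}}$) and a disagreement contribution (whose per-state probability mass is at most $\epsilon_\mathrm{opt}/\delta$ and whose per-step reward gap and future-value gap are controlled by $R_{\mathrm{max}}$ and $R_{\mathrm{max}}/(1-\gamma)$ respectively), and then solve the resulting scalar fixed-point inequality. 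The main obstacle is the bookkeeping here: the reward and transition disagreement contributions must be arranged so that they combine into the single factor $\frac{R_{\mathrm{max}}}{1-\gamma}\cdot\frac{\epsilon_\mathrm{opt}}{\delta}$ stated in the theorem rather than a looser $(1-\gamma)^{-2}$-scaled bound. Once the main inequality is in hand, the second claim is routine: whenever $\epsilon_{V^*} \leq \frac{R_{\mathrm{max}}}{1-\gamma}\cdot\frac{\epsilon_\mathrm{opt}}{\delta}$, the error is at most $\frac{2 R_{\mathrm{max}}}{(1-\gamma)\delta}\epsilon_\mathrm{opt}$, which is exactly Eq.~\eqref{eq:main-teach-obj} with $c = \frac{R_{\mathrm{max}}}{1-\gamma}\cdot\frac{2}{\delta}$.
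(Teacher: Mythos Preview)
Your proposal is correct and follows essentially the same route as the paper: the same triangle-inequality split through $\bss{\pi_\phi}_M$, the same observation that $\phi$ is exactly model-irrelevant for $\overline{M}$ so that $Q^*_{\overline{M}}(s,a)=Q^*_{M_\phi}(\phi(s),a)$ and the $\delta$-gap lifts (paper's Lemma~1), the same $\delta$-gap argument bounding the suboptimal-action mass by $\epsilon_\mathrm{opt}/\delta$ (paper's Lemma~5), and then conversion to a value gap in $M$. The only difference is packaging: for the last step the paper invokes Lemma~2 (Lemma~A.1 of \cite{sun2018dual}), which directly gives $\sup_s|V^{\pi}_M(s)-V^{\pi'}_M(s)|\le \frac{R_{\max}}{1-\gamma}\sup_s\|\pi(\cdot\mid s)-\pi'(\cdot\mid s)\|_1$ and thereby sidesteps the $(1-\gamma)^{-2}$ bookkeeping you flagged.
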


\begin{proof}
For the target MDP $\overline{M}$, we first show that $\bss{\pi_\phi}_{\overline{M}}$ is the unique optimal policy. One can easily observe that, by construction, $\phi$ is a model-irrelevant abstraction for $\overline{M}$ ($\epsilon_R = 0$, and $\epsilon_T = 0$ in Eq.~\eqref{eq:model-inv}). Then, from Lemma~1 (in the Appendix of the supplementary material), $\phi$ is also a $Q^*$-irrelevant abstraction for $\overline{M}$ ($\epsilon_{Q^*} = 0$ in Eq.~\eqref{eq:qval-inv}), i.e., we have 
\[
Q^*_{\overline{M}} \brr{s,a} ~=~ Q^*_{M_\phi} \brr{\phi\brr{s},a} , \forall{s \in \mathcal{S}, a \in \mathcal{A}} .
\]
And since $\pi_\phi$ is the unique optimal policy derived from $Q^*_{M_\phi}$, we can conclude that $\bss{\pi_\phi}_{\overline{M}}$ is the unique optimal policy for $\overline{M}$. 

In addition, using the Lemma~5 (in the Appendix of the supplementary material), we can show that any $\epsilon_\mathrm{opt}$-near optimal policy $\pi_{\overline{M}}: \mathcal{S} \rightarrow \Delta\brr{\mathcal{A}}$ of $\overline{M}$, satisfies the following condition:
\begin{equation}
\label{eq:tv-close}
\sup_s \norm{\pi_{\overline{M}}\brr{\cdot \mid s} -  \bss{\pi_\phi}_{\overline{M}}\brr{\cdot \mid s}}_1 ~\leq~ \frac{\epsilon_\mathrm{opt}}{\delta} .
\end{equation}

Finally, consider:
\begin{align*}
\sup_s \abs{V^*_M \brr{s} - V^{\pi_{\overline{M}}}_M \brr{s}} ~\stackrel{(i)}{=}~& \sup_s \abs{V^*_M \brr{s} - V^{\bss{\pi_\phi}_M}_M \brr{s} + V^{\bss{\pi_\phi}_{\overline M}}_M \brr{s} - V^{\pi_{\overline{M}}}_M \brr{s}} \\
~\stackrel{(ii)}{\leq}~& \sup_s \abs{V^*_M \brr{s} - V^{\bss{\pi_\phi}_M}_M \brr{s}} + \sup_s \abs{V^{\bss{\pi_\phi}_{\overline M}}_M \brr{s} - V^{\pi_{\overline{M}}}_M \brr{s}} \\
~\stackrel{(iii)}{\leq}~& \epsilon_{V^*} + \sup_s \abs{V^{\bss{\pi_\phi}_{\overline M}}_M \brr{s} - V^{\pi_{\overline{M}}}_M \brr{s}} \\
~\stackrel{(iv)}{\leq}~& \epsilon_{V^*} + \frac{R_{\mathrm{max}}}{1 - \gamma} \sup_s \norm{\pi_{\overline{M}}\brr{\cdot \mid s} -  \bss{\pi_\phi}_{\overline{M}}\brr{\cdot \mid s}}_1 \\
~\stackrel{(v)}{\leq}~& \epsilon_{V^*} + \frac{R_{\mathrm{max}}}{1 - \gamma} \cdot \frac{\epsilon_\mathrm{opt}}{\delta} ~=~ \epsilon'_\mathrm{opt} , \end{align*}
where $(i)$ is due to the fact that $\bss{\pi_\phi}_{\overline{M}} = \bss{\pi_\phi}_{M}$, $(ii)$ is by triangle inequality, $(iii)$ is from Eq.~\eqref{eq:given-policy-phi}, $(iv)$ is due to Lemma~2 (in the Appendix of the supplementary material), and $(v)$ is by Eq.~\eqref{eq:tv-close}.        
\end{proof}

\section{Practical Considerations}
\label{sec:practocal-changes}
In this section, we discuss some practical aspects of our environment shaping framework. 

\paragraph{Constructing Abstractions.}

Here, we briefly discuss the ways of obtaining the abstraction, and high-level policy pair $\brr{\phi,\pi_\phi}$, which is required as a prior knowledge to our shaping framework. One way of obtaining such knowledge is via careful feature engineering with the help of a domain expert. For example, when the ground state space is raw pixels acquired through noisy sensors, an expert can focus only on the relevant information that is required for the decision making task. One can also utilize the existing computational approaches, such as~\cite{jong2005state,abel2019state, sinclair-2019}, to automatically construct meaningful abstractions.   

\paragraph{Reward-only Shaping.}

In the case of shaping via only modifying the reward function, the shaped MDP $\overline{M}$ differs from the original MDP $M$ only in its reward function $R$. In particular, given the abstraction $\phi$, and the MDP $M_\phi$ constructed in Section~\ref{sec:mphi}, we construct a reward-only shaped MDP $\widehat{M} = \brr{\mathcal{S}, \mathcal{A}, \widehat T, \widehat R, \widehat \mu_0, \gamma}$ as follows:
\begin{enumerate}
\item $\widehat{T} \brr{s' \mid s,a} = {T} \brr{s' \mid s,a}$, and $\widehat{\mu}_0 \brr{s} = \mu \brr{s}$, for all $s' \in \mathcal{S}, s \in \mathcal{S}, a \in \mathcal{A}$
\item $\widehat{R} \brr{s,a} = R_\phi \brr{\phi\brr{s},a}$, for all $s \in \mathcal{S}, a \in \mathcal{A}$
\end{enumerate}

The following theorem, similar to Theorem~\ref{thm:main-change-T}, provides a guarantee on preserving the optimality behavior in the reward-only shaped environment.

\begin{theorem}
\label{thm:reward-only-shaping}
We assume the following:
\[
\sup_{s,a} ~ \norm{T\brr{\cdot \mid s,a} - \overline T\brr{\cdot \mid s,a}}_1 ~\leq~ \beta_T . 
\]
Then, any $\epsilon_\mathrm{opt}$-near optimal policy $\pi_{\widehat{M}}: \mathcal{S} \rightarrow \Delta\brr{\mathcal{A}}$ of $\widehat{M}$, satisfies the following condition:  
\begin{equation}
    \label{eq:teaching-objective-new}
    \sup_s \abs{V^*_M \brr{s} - V^{\pi_{\widehat{M}}}_M \brr{s}} ~\leq~ \epsilon_{V^*} + \frac{R_{\mathrm{max}}}{1 - \gamma} \cdot \frac{1}{\delta} \cdot \bcc{\frac{2 \cdot \gamma \cdot \beta_T \cdot R_\phi^{\mathrm{max}}}{\brr{1-\gamma}^2} + \epsilon_\mathrm{opt}} ,
\end{equation}
where $R_\phi^{\mathrm{max}} = \max_{x,a} \abs{R_\phi \brr{x,a}}$. 
\end{theorem}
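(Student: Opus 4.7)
The strategy is to reduce Theorem~\ref{thm:reward-only-shaping} to an approximate version of Theorem~\ref{thm:main-change-T} by viewing the reward-only shaped MDP $\widehat{M}$ as a transition-perturbed copy of the fully shaped MDP $\overline{M}$ constructed in Section~\ref{sec:barm}. By construction, $\widehat{M}$ and $\overline{M}$ share the same reward function ($\widehat{R}=\overline{R}=R_\phi\circ\phi$), and by hypothesis their transition kernels differ in $\ell_1$ by at most $\beta_T$ for every $(s,a)$. This lets us invoke the standard simulation (perturbation) lemma for MDPs: for every policy $\pi$,
\[
\sup_s \abs{V^{\pi}_{\widehat{M}}\brr{s}-V^{\pi}_{\overline{M}}\brr{s}} \;\leq\; \frac{\gamma\,R_\phi^{\mathrm{max}}\,\beta_T}{\brr{1-\gamma}^2} \;=:\; \alpha,
\]
where the reward-range constant is $R_\phi^{\mathrm{max}}$ since both MDPs use the shaped reward $R_\phi\circ\phi$.

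Next I would transfer near-optimality from $\widehat{M}$ to $\overline{M}$ by a short triangle inequality. Let $\bss{\pi_\phi}_{\overline M}$ denote the lifted high-level policy. The proof of Theorem~\ref{thm:main-change-T} already establishes that $\bss{\pi_\phi}_{\overline M}$ is the unique optimal policy of $\overline{M}$, with the $\delta$-action-value gap inherited from Eq.~\eqref{eq:unique-optimal-phi}. Applying the simulation bound to both $\bss{\pi_\phi}_{\overline M}$ and $\pi_{\widehat M}$, and using $V^*_{\widehat{M}}\brr{s}-V^{\pi_{\widehat{M}}}_{\widehat{M}}\brr{s}\leq \epsilon_{\mathrm{opt}}$, yields
\[
V^*_{\overline{M}}\brr{s}-V^{\pi_{\widehat{M}}}_{\overline{M}}\brr{s} \;\leq\; \epsilon_{\mathrm{opt}} + 2\alpha \;=:\; \widetilde\epsilon.
\]
Since the $\delta$-gap of $\bss{\pi_\phi}_{\overline M}$ in $\overline M$ persists, Lemma~5 applied to $\overline M$ then gives
\[
\sup_s \norm{\pi_{\widehat{M}}\brr{\cdot\mid s}-\bss{\pi_\phi}_{\overline{M}}\brr{\cdot\mid s}}_1 \;\leq\; \frac{\widetilde\epsilon}{\delta}.
\]

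Finally, I would replay the concluding chain of Theorem~\ref{thm:main-change-T} verbatim in the original MDP $M$: insert $V^{\bss{\pi_\phi}_M}_M$ (noting $\bss{\pi_\phi}_{\overline{M}}=\bss{\pi_\phi}_M$ since both equal $\pi_\phi\circ\phi$), bound the first piece by $\epsilon_{V^*}$ via Eq.~\eqref{eq:given-policy-phi}, and bound the second piece by $\tfrac{R_{\mathrm{max}}}{1-\gamma}\cdot\tfrac{\widetilde\epsilon}{\delta}$ using Lemma~2. Substituting $\widetilde\epsilon=\epsilon_{\mathrm{opt}}+\tfrac{2\gamma R_\phi^{\mathrm{max}}\beta_T}{\brr{1-\gamma}^2}$ reproduces Eq.~\eqref{eq:teaching-objective-new} exactly. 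The main obstacle I anticipate is purely bookkeeping around two distinct reward bounds: the simulation-lemma step must use $R_\phi^{\mathrm{max}}$ (the range of the shaped reward $\widehat R=\overline R$), whereas the closing Lemma~2 step, which operates in $M$, uses $R_{\mathrm{max}}$; making sure the $\gamma$, the $\brr{1-\gamma}^2$, and the factor of $2$ arising from the two invocations of the simulation lemma line up with the exact form of Eq.~\eqref{eq:teaching-objective-new} is the only delicate piece.
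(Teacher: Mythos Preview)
Your proposal is correct and follows essentially the same route as the paper: establish that $\widehat{M}$ and $\overline{M}$ are $(0,\beta_T)$ approximately-equivalent, show that $\pi_{\widehat M}$ is $\bigl(\epsilon_{\mathrm{opt}}+\tfrac{2\gamma\beta_T R_\phi^{\mathrm{max}}}{(1-\gamma)^2}\bigr)$-near optimal in $\overline M$, and then invoke Theorem~\ref{thm:main-change-T} with this inflated tolerance. The only cosmetic difference is that the paper decomposes the near-optimality transfer as $|V^*_{\overline M}-V^*_{\widehat M}|+|V^*_{\widehat M}-V^{\pi_{\widehat M}}_{\widehat M}|+|V^{\pi_{\widehat M}}_{\widehat M}-V^{\pi_{\widehat M}}_{\overline M}|$ and bounds the first term via Lemma~\ref{lemma:robust-diff} and the third via Lemma~\ref{lemma:simulation}, whereas you apply the simulation bound directly to the two policies $\bss{\pi_\phi}_{\overline M}$ and $\pi_{\widehat M}$; both give the identical $2\alpha$ slack.
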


\paragraph{Transition Dynamics-only Shaping.}
Here, we shape only the transition dynamics while keeping the reward function of the shaped MDP $\overline{M}$ is the same as that of the original MDP $M$. In particular, given the abstraction $\phi$, and the MDP $M_\phi$ constructed in Section~\ref{sec:mphi}, we construct a target/shaped MDP $\widehat{M} = \brr{\mathcal{S}, \mathcal{A}, \widehat T, \widehat R, \widehat \mu_0, \gamma}$ as follows:
\begin{enumerate}
\item $\widehat{T} \brr{s' \mid s,a} = \frac{T_\phi \brr{\phi\brr{s'} \mid \phi\brr{s},a}}{\int_{\mathcal{S}} \one \bcc{\phi\brr{\tilde{s}} = \phi\brr{s'}} d\tilde{s}}$, for all $s' \in \mathcal{S}, s \in \mathcal{S}, a \in \mathcal{A}$
\item $\widehat{R} \brr{s,a} = R \brr{s,a}$, and $\widehat{\mu}_0 \brr{s} = \mu \brr{s}$, for all $s \in \mathcal{S}, a \in \mathcal{A}$
\end{enumerate}

The following theorem, similar to Theorem~\ref{thm:main-change-T}, provides a guarantee on preserving the optimality behavior in the dynamics-only shaped environment.

\begin{theorem}
\label{thm:dynamics-only-shaping}
We assume the following:
\[
\sup_{s,a} ~ \abs{R\brr{s,a} - \overline R\brr{s,a}} ~\leq~ \beta_R \cdot \max \bcc{R_\mathrm{max}, R_\phi^{\mathrm{max}}} . 
\]
Then, any $\epsilon_\mathrm{opt}$-near optimal policy $\pi_{\widehat{M}}: \mathcal{S} \rightarrow \Delta\brr{\mathcal{A}}$ of $\widehat{M}$, satisfies the following condition:  
\begin{equation}
    \label{eq:teaching-objective-new}
    \sup_s \abs{V^*_M \brr{s} - V^{\pi_{\widehat{M}}}_M \brr{s}} ~\leq~ \epsilon_{V^*} + \frac{R_{\mathrm{max}}}{1 - \gamma} \cdot \frac{1}{\delta} \cdot \bcc{\frac{2 \cdot \beta_R \cdot \max \bcc{R_\mathrm{max}, R_\phi^{\mathrm{max}}}}{{1-\gamma}} + \epsilon_\mathrm{opt}} ,
\end{equation}
where $R_\phi^{\mathrm{max}} = \max_{x,a} \abs{R_\phi \brr{x,a}}$. 
\end{theorem}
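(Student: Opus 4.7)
The plan is to mirror the proof of Theorem~\ref{thm:main-change-T}, while carefully accounting for the fact that $\widehat{M}$ and the fully shaped MDP $\overline{M}$ of Section~\ref{sec:barm} share the same transition kernel $\overline{T}$ and initial distribution but differ in their reward functions (by at most $\beta_R \cdot \max\bcc{R_{\mathrm{max}}, R_\phi^{\mathrm{max}}}$ by the theorem's hypothesis). Once this reward mismatch has been translated into a controllable value mismatch, the remainder follows essentially verbatim from the proof of Theorem~\ref{thm:main-change-T}.

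First, I would translate the $\epsilon_{\mathrm{opt}}$-near optimality of $\pi_{\widehat{M}}$ in $\widehat{M}$ into a slightly weaker near-optimality guarantee in $\overline{M}$. A standard simulation-lemma argument---applicable because the two MDPs share the same transitions---gives, for every policy $\pi$, the uniform bound $\sup_s \abs{V^\pi_{\widehat{M}}\brr{s} - V^\pi_{\overline{M}}\brr{s}} \leq \frac{\beta_R \cdot \max\bcc{R_{\mathrm{max}}, R_\phi^{\mathrm{max}}}}{1-\gamma}$. Applying this estimate twice---once for the comparator $\bss{\pi_\phi}_{\overline{M}}$ (which is the unique optimum of $\overline{M}$ as established in the proof of Theorem~\ref{thm:main-change-T}), and once for $\pi_{\widehat{M}}$---and chaining through the $\epsilon_{\mathrm{opt}}$-gap that holds in $\widehat{M}$ yields
\[
\sup_s \abs{V^*_{\overline{M}}\brr{s} - V^{\pi_{\widehat{M}}}_{\overline{M}}\brr{s}} ~\leq~ \epsilon_{\mathrm{opt}} + \frac{2 \cdot \beta_R \cdot \max\bcc{R_{\mathrm{max}}, R_\phi^{\mathrm{max}}}}{1-\gamma} .
\]

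Second, I would invoke Lemma~5 inside the MDP $\overline{M}$. Because $\bss{\pi_\phi}_{\overline{M}}$ retains its role as the unique optimum of $\overline{M}$ with a $\delta$-suboptimality gap---this property is inherited entirely from the construction in Section~\ref{sec:mphi} and does not involve $\widehat{M}$---Lemma~5 converts the value-gap above into the $L_1$ policy gap
\[
\sup_s \norm{\pi_{\widehat{M}}\brr{\cdot \mid s} - \bss{\pi_\phi}_{\overline{M}}\brr{\cdot \mid s}}_1 ~\leq~ \frac{1}{\delta} \bcc{\epsilon_{\mathrm{opt}} + \frac{2 \cdot \beta_R \cdot \max\bcc{R_{\mathrm{max}}, R_\phi^{\mathrm{max}}}}{1-\gamma}} .
\]

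Third, I would reproduce the five-line chain at the end of the proof of Theorem~\ref{thm:main-change-T}, \emph{evaluated in the original MDP $M$}: insert $\pm V^{\bss{\pi_\phi}_M}_M\brr{s}$, apply the triangle inequality, control the first difference via Eq.~\eqref{eq:given-policy-phi} by $\epsilon_{V^*}$, and control the second difference via Lemma~2 by $\frac{R_{\mathrm{max}}}{1-\gamma}$ times the $L_1$ gap just obtained. Since Lemma~2 compares two policies inside the \emph{same} MDP $M$, the mismatch between $\widehat{T}$ and $T$ never enters this step---its entire cost was already paid in step one. The main obstacle I expect is the bookkeeping in step one: one must insist that the comparator in $\overline{M}$ be $\bss{\pi_\phi}_{\overline{M}}$ (so that the $\delta$-gap property needed by step two applies to it) rather than some arbitrary optimum, and track the two factors of $\beta_R \cdot \max\bcc{R_{\mathrm{max}}, R_\phi^{\mathrm{max}}}/(1-\gamma)$ precisely enough to land on the exact constant stated in Eq.~\eqref{eq:teaching-objective-new}.
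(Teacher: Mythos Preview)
Your proposal is correct and follows essentially the same route as the paper's proof. The paper first observes that $\widehat{M}$ and $\overline{M}$ are $(\beta_R,0)$ approximately-equivalent, then bounds $\sup_s\abs{V^*_{\overline{M}}(s)-V^{\pi_{\widehat{M}}}_{\overline{M}}(s)}$ by the three-term triangle inequality $\abs{V^*_{\overline{M}}-V^*_{\widehat{M}}}+\abs{V^*_{\widehat{M}}-V^{\pi_{\widehat{M}}}_{\widehat{M}}}+\abs{V^{\pi_{\widehat{M}}}_{\widehat{M}}-V^{\pi_{\widehat{M}}}_{\overline{M}}}$ (invoking Lemma~\ref{lemma:robust-diff} for the first term and Lemma~\ref{lemma:simulation} for the third), and then simply cites Theorem~\ref{thm:main-change-T} with the enlarged optimality gap; your version uses the simulation lemma on the comparator $\bss{\pi_\phi}_{\overline{M}}$ in place of Lemma~\ref{lemma:robust-diff} and spells out the remaining steps of Theorem~\ref{thm:main-change-T} explicitly, which is an equivalent (and arguably tidier) way to arrive at the same constant.
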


The proofs of Theorems~\ref{thm:reward-only-shaping}~and~\ref{thm:dynamics-only-shaping} can be found in the Appendix of the supplementary material. 

\section{Experimental Evaluation}\label{sec:experiments}
\vspace{-2mm}


In this section, we empirically investigate whether our environment shaping approach enables simple reinforcement learning algorithm such as policy gradient, to find (near-)optimal policy under challenging setting with noisy and sparse feedback. 

\subsection{Environment Details}
\vspace{-2mm}

We consider the \textit{object gathering game} environment, inspired by environments considered in recent works \cite{leibo17,raileanu18modeling,ghosh2020towards-aamas,DBLP:conf/nips/TschiatschekGHD19}, see Figure~\ref{fig:chain.env}. In this game, an agent starts from one of the corner cells. The goal of the agent is to collect the objects, so that to maximize its total return. By collecting a ``star" object, the agent receives a reward of $1.0$, a ``plus" object provides $0.04$ reward, and collecting a ``dot" object provides either $0.1$ or $-0.1$ reward with equal probability. At initialization, the ``star" object always appears opposite corner of the agent's position, ``plus" appears in the four cells distance from the agent's initial position, and ``dot" appears uniformly at random anywhere. After collecting the ``plus" object, it appears on the same location, and ``dot" object disappears after collecting it. The game's episode ends when the maximum time step $H=30$ is reached, or the agent has collected the ``star" object. In this game we have two actions \{``left", ``right"\}, i.e., $|\mathcal{A}|=2$. The state of the system is represented by the position of the agent and the positions of all three objects. Since we have the number of the cells to be $15$ and our state representation is described as the concatenation of the binary vector of the position of the agent and the position of each object, it will give us rise to $15^4$ total possible states, i.e., $|\mathcal{S}|=15^4$. The transition dynamics of the environment is defined as follows: with probability $0.99$, the actions succeed in navigating the agent to left or right, as shown with the arrows in Figure \ref{fig:chain.env}; and with probability $0.01$, the move to the opposite direction is performed. The rewards are discounted at $\gamma=0.99$.  The optimal policy $\pi^*$ is to go towards to the ``star"  object and collect it. 


\begin{figure*}[t!]
	\begin{subfigure}{0.45\linewidth}
		\includegraphics[width=1.15\linewidth]{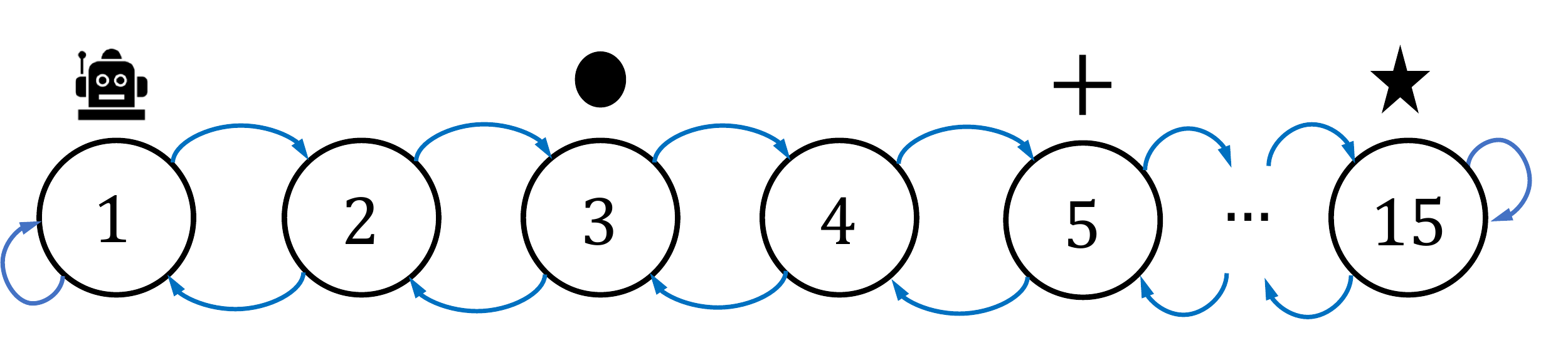}
    	\caption{Environment}
        \label{fig:chain.env}
	\end{subfigure}
	\quad \quad \quad
	\begin{subfigure}{0.45\linewidth}
		\includegraphics[height=.8\linewidth]{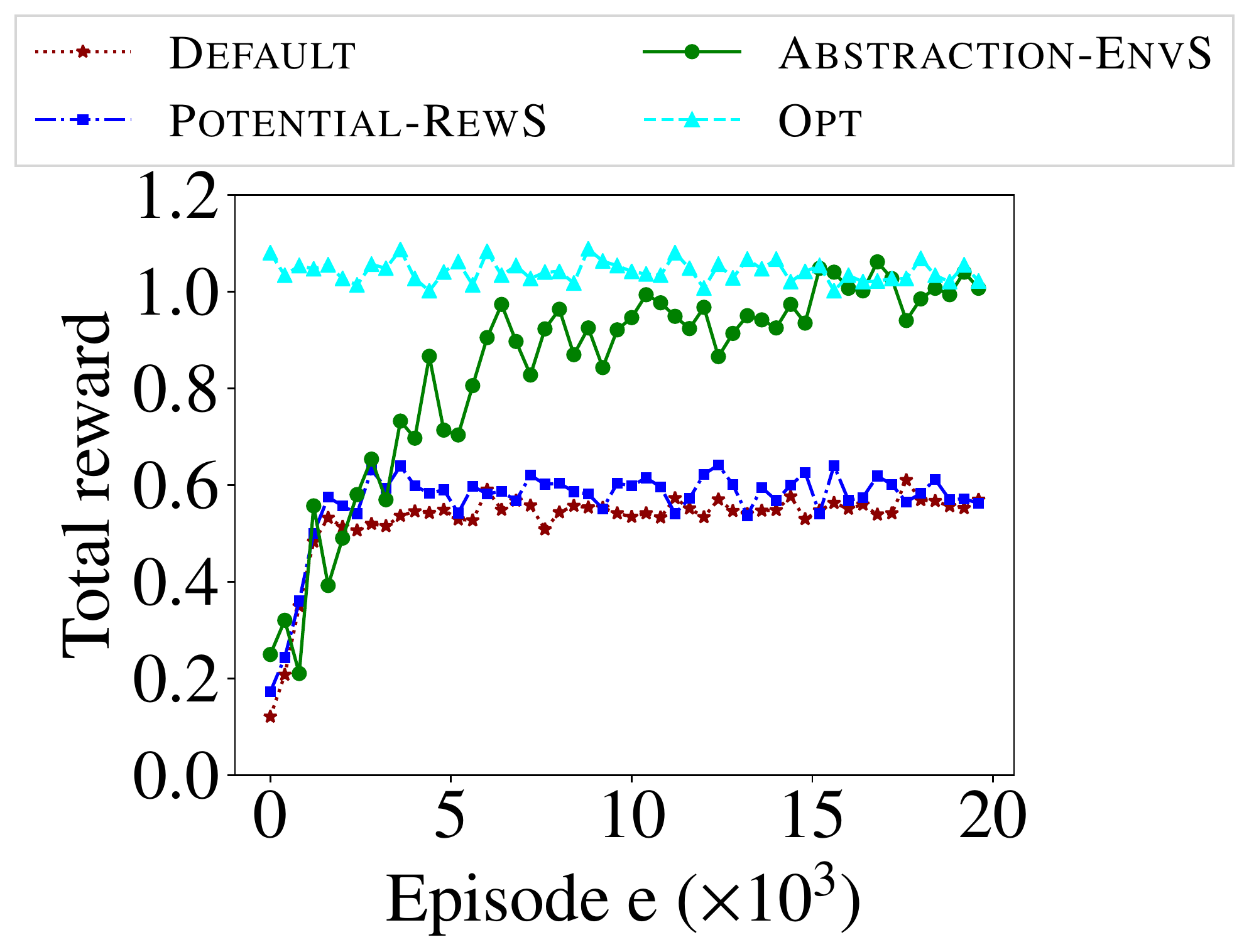}
		\vspace{-4mm}
    	\caption{Performance}
        \label{fig:results.1}
	\end{subfigure}
	\caption{(a) An example of the \textit{gathering game} environment with the actions $\mathcal{A}$=\{"left", "right"\} and the state space size $|\mathcal{S}|=15^4$. (b) Learning curves for \textit{gathering game} task, averaged across 30 runs of each algorithm with random initialization.}
\end{figure*}



\subsection{Experimental Setup}
\vspace{-2mm}


We use the vanilla policy gradient method as the agent's algorithm, where the policy network is implemented as a two-layer neural network with \texttt{relu} activation function and \texttt{softmax} on the last layer. A one-hot vector of length $15$, corresponding to the number of cells, represents the agent's location and the positions of all three objects in the environment. Hence the length of the input vector to the neural network is $4\times15=60$.


\textbf{Methods evaluated.} The following algorithms were considered in the comparison: (i) default baseline with training on original $M$ (\textsc{Default}), (ii) baseline of potential-based reward shaping of $M$  (\textsc{Potential-RewS}) (see \cite{ng1999policy}), (iii) training on $\overline{M}$ shaped using our approach (\textsc{Abstraction-EnvS}), and (iv) optimal policy that follows object ``star" (\textsc{Opt}). 



For the \textsc{Potential-RewS} approach, we need to approximate the optimal value function  $V_{M}^{*}$ of the original MDP $M$. To this end, we first run the policy gradient algorithm up to $50,000$ iterations, and then used resulting policy $\pi$ to approximate $V_{M}^{*}$ by $V_M^{\pi}$. To estimate the value function of the policy $\pi$, we used Monte Carlo (MC) prediction algorithm with $100,000$ samples. Given that we have domain knowledge of our problem, e.g., the information about the agent's position and `star" object's position is sufficient to learn the optimal behavior, our abstraction $\phi:\mathcal{S}\rightarrow \mathcal{X}_\phi$ can be defined as follows: we consider only the position of the agent and position of the ``star" object to encode our state space. This abstraction reduces the state-space from $15^4$ down to $15^2$, i.e.,  $|\mathcal{X}_\phi| = 15^2$. We construct the MDP $M_\phi$ by first obtaining ${\widetilde{M}}_{\phi,\brr{2,1}}$ and then using the rewards only attack as in \cite{rakhsha2020policy}. 


\vspace{-2mm}
\subsection{Results}
\vspace{-2mm}

We evaluate the performance of the four algorithms mentioned above and report the results by averaging across $30$ runs and with random initialization for each run. Figure~\ref{fig:results.1} shows the convergence of the total reward of the episode for all the algorithms, where the policy following the ``star" object attains the optimal value. As expected, training on the shaped environment outperforms two baselines.


Since the training on the original environment $M$ needs exponentially many episodes to learn optimal policy, it converges to a local minimum~\cite{2002-kakade-langford}, e.g., collecting the nearest object ``plus". Since our environment provides noisy signals feedback because of the randomness of the ``dot" object, it is challenging to compute an appropriate potential function. As a result, the applicability of the potential-based reward shaping is limited in our settings.


\vspace{-3mm}
\section{Conclusions}\label{sec:conclusion}
\vspace{-2mm}

In this work, we propose an abstraction based environment shaping framework to accelerate the learning process of an agent. Our framework is fundamentally different from the existing potential-based shaping methods. We theoretically prove that our environment shaping preserves the optimality behavior between the original and target MDPs, and empirically demonstrate its efficacy on tasks with sparse and noisy feedback signals.  

\clearpage
\section*{Acknowledgments}
This work was supported in part by the European Research Council (ERC) under the European Union's Horizon 2020 research and innovation program (grant agreement n 725594 -- time-data), and the Swiss National Science Foundation (SNSF) under grant number 407540\_167319.



\bibliography{main}
\bibliographystyle{unsrt}

\clearpage
\appendix
{\allowdisplaybreaks
\section{Auxiliary Lemmas}\label{sec.appendix.mmp.main}

\begin{lemma}[Theorem~2 from \cite{nanjiangnotes}]
\label{lemma:abstraction-relationship}
If $\phi$ is an $\brr{\epsilon_R , \epsilon_T}$-approximate model-irrelevant abstraction, then $\phi$ is also an approximate $Q^*$-irrelevant abstraction with approximation error $\epsilon_{Q^*} = \frac{\epsilon_R}{1-\gamma} + \frac{\gamma \epsilon_T R_\mathrm{max}}{2 \brr{1-\gamma}^2}$. Further, if $\phi$ is an $\epsilon_{Q^*}$-approximate $Q^*$-irrelevant abstraction, then $\phi$ is also an approximate $V^*$-irrelevant abstraction with approximation error $\epsilon_{V^*} = \frac{2 \epsilon_{Q^*}}{1-\gamma}$. 
\end{lemma}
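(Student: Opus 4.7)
\emph{Proof plan.} The two halves of the lemma are independent, and I would treat them in the order stated. For the first implication, I would explicitly construct an abstract MDP $M_\phi = \brr{\mathcal{X}_\phi, \mathcal{A}, T_\phi, R_\phi, \ldots, \gamma}$ and set $f_\phi := Q^*_{M_\phi}$. For each $x \in \mathcal{X}_\phi$, pick any representative $s_x \in \phi^{-1}\brr{x}$ and let $R_\phi\brr{x, a} := R\brr{s_x, a}$ and $T_\phi\brr{x' \mid x, a} := T\brr{x' \mid s_x, a}$. The $\brr{\epsilon_R, \epsilon_T}$-model-irrelevance assumption~\eqref{eq:model-inv} then yields the uniform pointwise bounds $\abs{R\brr{s, a} - R_\phi\brr{\phi\brr{s}, a}} \leq \epsilon_R$ and $\sum_{x'} \abs{T\brr{x' \mid s, a} - T_\phi\brr{x' \mid \phi\brr{s}, a}} \leq \epsilon_T$ for every $s, a$.

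Now let $\Delta := \sup_{s, a} \abs{Q^*_M\brr{s, a} - f_\phi\brr{\phi\brr{s}, a}}$ and subtract the Bellman optimality equations for $Q^*_M$ and $Q^*_{M_\phi}$. Introducing the lifted value $\tilde V\brr{s} := V^*_{M_\phi}\brr{\phi\brr{s}}$ as an intermediate, the difference decomposes into three pieces: a reward gap of magnitude $\leq \epsilon_R$; a ground-value term $\gamma \int T\brr{s' \mid s, a} \bss{V^*_M\brr{s'} - \tilde V\brr{s'}} ds'$, which is bounded by $\gamma \Delta$; and a transition-mismatch term $\gamma \sum_{x'} \bss{T\brr{x' \mid s, a} - T_\phi\brr{x' \mid \phi\brr{s}, a}} V^*_{M_\phi}\brr{x'}$. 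The last is bounded via the centering identity $\sum_{x'}\brr{p - q}\brr{x'} V\brr{x'} = \sum_{x'}\brr{p - q}\brr{x'} \brr{V\brr{x'} - c}$ with $c := \frac{R_\mathrm{max}}{2\brr{1 - \gamma}}$; since $V^*_{M_\phi} \in \bss{0, \frac{R_\mathrm{max}}{1 - \gamma}}$, this gives a bound of $\frac{\gamma R_\mathrm{max}}{2\brr{1-\gamma}} \cdot \epsilon_T$. Combining the pieces produces the self-bound $\Delta \leq \epsilon_R + \gamma \Delta + \frac{\gamma R_\mathrm{max} \epsilon_T}{2\brr{1-\gamma}}$, and solving yields the claimed $\epsilon_{Q^*} = \frac{\epsilon_R}{1-\gamma} + \frac{\gamma \epsilon_T R_\mathrm{max}}{2\brr{1-\gamma}^2}$. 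The sharp constant $\tfrac{1}{2}$ in the transition-mismatch term is the one non-routine step; without the centering trick one would pick up an extra factor of $2$.

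For the second implication, I would invoke the standard ``greedy-w.r.t.-approximate-$Q$'' argument. Define $\pi_\phi\brr{x} := \argmax_a f_\phi\brr{x, a}$ and lift via $\bss{\pi_\phi}_M\brr{s} := \pi_\phi\brr{\phi\brr{s}}$. For any $s$ and its $M$-optimal action $a^*\brr{s}$, the chain $Q^*_M\brr{s, a^*\brr{s}} \leq f_\phi\brr{\phi\brr{s}, a^*\brr{s}} + \epsilon_{Q^*} \leq f_\phi\brr{\phi\brr{s}, \bss{\pi_\phi}_M\brr{s}} + \epsilon_{Q^*} \leq Q^*_M\brr{s, \bss{\pi_\phi}_M\brr{s}} + 2 \epsilon_{Q^*}$ (middle inequality by the definition of $\pi_\phi$; the outer two by~\eqref{eq:qval-inv}) gives $V^*_M\brr{s} - Q^*_M\brr{s, \bss{\pi_\phi}_M\brr{s}} \leq 2 \epsilon_{Q^*}$. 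A single Bellman expansion of $V^*_M - V^{\bss{\pi_\phi}_M}_M$ along $\bss{\pi_\phi}_M$ then produces $V^*_M\brr{s} - V^{\bss{\pi_\phi}_M}_M\brr{s} \leq 2 \epsilon_{Q^*} + \gamma \sup_{s'} \bss{V^*_M\brr{s'} - V^{\bss{\pi_\phi}_M}_M\brr{s'}}$; taking the supremum over $s$ yields $\sup_s \abs{V^*_M\brr{s} - V^{\bss{\pi_\phi}_M}_M\brr{s}} \leq \frac{2 \epsilon_{Q^*}}{1 - \gamma} = \epsilon_{V^*}$. The only subtlety is correctly viewing $\brr{s, a} \mapsto f_\phi\brr{\phi\brr{s}, a}$ as a uniform $\epsilon_{Q^*}$-approximation of $Q^*_M$ for which $\bss{\pi_\phi}_M$ is greedy; after that the bound is entirely standard.
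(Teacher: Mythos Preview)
The paper does not actually supply a proof of this lemma: it is stated in the appendix with the attribution ``Theorem~2 from \cite{nanjiangnotes}'' and no argument is given. So there is nothing to compare your approach against in the paper itself.

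That said, your plan is correct and is essentially the standard argument one finds in the cited lecture notes. For the first implication, constructing the abstract MDP via representatives, setting $f_\phi = Q^*_{M_\phi}$, and deriving the self-bounding inequality on $\Delta$ via the Bellman optimality equations is exactly the expected route; your remark that the centering identity is what produces the sharp $\tfrac{1}{2}$ (rather than the naive $\norm{V^*_{M_\phi}}_\infty \leq R_\mathrm{max}/(1-\gamma)$ bound) is the right diagnosis of the only non-mechanical step. For the second implication, the three-link chain through $f_\phi$ followed by the one-step Bellman unrolling of $V^*_M - V^{\bss{\pi_\phi}_M}_M$ is again the standard performance-difference argument for greedy policies with respect to approximate $Q$-functions. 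Both halves go through as you describe.
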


\begin{lemma}[Lemma~A.1 from \cite{sun2018dual}]
\label{lemma:different-policy}
For any two policy $\pi$ and $\pi'$ (acting in the MDP $M$), we have
\[
\sup_s \abs{V^{\pi}_M \brr{s} - V^{\pi'}_M \brr{s}} ~\leq~ \frac{R_{\mathrm{max}}}{1 - \gamma} \sup_s \norm{\pi \brr{\cdot \mid s} -  \pi'\brr{\cdot \mid s}}_1 . 
\]
\end{lemma}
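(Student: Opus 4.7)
The plan is to derive the bound via a recursive contraction argument built on the Bellman equation, splitting the per-state value difference into a ``policy-change'' piece and a ``value-change'' piece, and then solving the resulting self-referential inequality for $\Delta := \sup_s \abs{V^{\pi}_M\brr{s} - V^{\pi'}_M\brr{s}}$. First I would write both $V^{\pi}_M\brr{s}$ and $V^{\pi'}_M\brr{s}$ in their one-step Bellman form, subtract, and insert the cross term $\sum_a \pi'\brr{a \mid s}\,Q^{\pi}_M\brr{s,a}$ to obtain
\[
V^{\pi}_M\brr{s} - V^{\pi'}_M\brr{s} ~=~ \sum_a \bss{\pi\brr{a \mid s} - \pi'\brr{a \mid s}}\, Q^{\pi}_M\brr{s,a} ~+~ \sum_a \pi'\brr{a \mid s}\bss{Q^{\pi}_M\brr{s,a} - Q^{\pi'}_M\brr{s,a}}.
\]
The second summand reduces, via the Bellman identity $Q^{\pi}_M - Q^{\pi'}_M = \gamma \cdot T\brr{V^{\pi}_M - V^{\pi'}_M}$, to a $\gamma$-contraction on $\Delta$. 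The first summand I would bound by H\"older's inequality using $\norm{Q^{\pi}_M\brr{s,\cdot}}_\infty \leq R_{\mathrm{max}}/\brr{1-\gamma}$, a direct consequence of the per-step reward bound and a geometric series. Letting $L := \sup_s \norm{\pi\brr{\cdot \mid s} - \pi'\brr{\cdot \mid s}}_1$ and supremizing over $s$ on both sides, the combined recursion becomes $\Delta \leq \tfrac{R_{\mathrm{max}}}{1-\gamma}\, L + \gamma\, \Delta$, whose fixed-point solution is of the form $\Delta \leq \tfrac{R_{\mathrm{max}}\, L}{\brr{1-\gamma}^2}$.

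The main obstacle I anticipate is matching the constant $R_{\mathrm{max}}/\brr{1-\gamma}$ stated in the lemma rather than the looser $R_{\mathrm{max}}/\brr{1-\gamma}^2$ produced by the naive contraction above. Two refinements are worth trying. The first is to replace $Q^{\pi}_M$ in the policy-change summand by the centered advantage $A^{\pi}_M\brr{s,a} = Q^{\pi}_M\brr{s,a} - V^{\pi}_M\brr{s}$, which is admissible because $\sum_a \bss{\pi\brr{a \mid s} - \pi'\brr{a \mid s}}\, c\brr{s} = 0$ for any function $c$ of $s$ alone, and then apply a span-seminorm bound to sharpen the H\"older step. The second is a direct trajectory-coupling argument: starting both processes at a common $s_0$, maximally couple the two action samples at each jointly visited state, let $\tau$ denote the first disagreement time, and bound $\abs{V^{\pi}_M\brr{s_0} - V^{\pi'}_M\brr{s_0}} \leq \tfrac{R_{\mathrm{max}}}{1-\gamma}\, \expct{\gamma^\tau}$, then control $\expct{\gamma^\tau}$ through the per-step disagreement probability, which is at most $L/2$. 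Since the lemma is invoked only as an auxiliary tool (cited from \cite{sun2018dual}), delivering the exact stated constant is largely a matter of careful accounting along one of these two routes.
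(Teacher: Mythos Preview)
The paper does not supply its own proof of this lemma; it is quoted verbatim from \cite{sun2018dual} and used as a black box in step~(iv) of the proof of Theorem~\ref{thm:main-change-T}. So there is nothing in the paper to compare your argument against.

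Your contraction derivation is the standard one and is correct as written: it yields
\[
\sup_s \abs{V^{\pi}_M\brr{s} - V^{\pi'}_M\brr{s}} ~\leq~ \frac{R_{\mathrm{max}}}{\brr{1-\gamma}^2}\, \sup_s \norm{\pi\brr{\cdot\mid s}-\pi'\brr{\cdot\mid s}}_1 .
\]
However, the ``main obstacle'' you flag is not a matter of accounting. Neither refinement you propose recovers the single $\brr{1-\gamma}$ factor: advantage centering only buys a factor of $1/2$ (since $\mathrm{span}_a\, Q^\pi\brr{s,a}$ is still of order $R_{\mathrm{max}}/\brr{1-\gamma}$), and the coupling argument gives $\expct{\gamma^\tau}\leq \tfrac{L}{2\brr{1-\gamma}}$ for the same reason, landing you back at $\brr{1-\gamma}^{-2}$. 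In fact the bound with a single $\brr{1-\gamma}$ is false in general. A concrete witness is a chain $0,1,\dots,n$ with an absorbing reward state $n$, where action $a$ advances and action $b$ resets to $0$; take $\pi$ to always play $a$ and $\pi'$ to play $a$ with probability $1-\eps$. With $\gamma=0.99$, $\eps=0.01$, $n=100$ one gets $\abs{V^\pi_M(0)-V^{\pi'}_M(0)}\approx 13\,R_{\mathrm{max}}$ while $\tfrac{R_{\mathrm{max}}}{1-\gamma}\cdot L = 2\,R_{\mathrm{max}}$.

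So your $\brr{1-\gamma}^{-2}$ argument is the right one; the stated constant in the lemma appears to be a transcription slip (the downstream Theorem~\ref{thm:main-change-T} would then carry an extra $\brr{1-\gamma}^{-1}$ in its second term, without changing the qualitative conclusion).
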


\begin{definition}[approximately-equivalent MDPs, adapted from \cite{even2003approximate}]
Suppose we have two MDPs $M_1 = \brr{\mathcal{S}, \mathcal{A}, T_1, R_1, \brr{\mu_0}_1, \gamma}$ and $M_2 = \brr{\mathcal{S}, \mathcal{A}, T_2, R_2, \brr{\mu_0}_2, \gamma}$, and rewards are bounded in $\bss{0,R_\mathrm{max}}$. We call $M_1$ and $M_2$ as $\brr{\beta_R, \beta_T}$ approximately-equivalent if the following holds:
\begin{align*}
\sup_{s \in \mathcal{S},a \in \mathcal{A}} ~ \abs{R_1\brr{s,a} - R_2\brr{s,a}} ~\leq~& \beta_R \cdot R_\mathrm{max} \\
\sup_{s \in \mathcal{S},a \in \mathcal{A}} ~ \norm{T_1\brr{\cdot \mid s,a} - T_2\brr{\cdot \mid s,a}}_1 ~\leq~& \beta_T .
\end{align*}
\end{definition}

\begin{lemma}
\label{lemma:robust-diff}
Suppose we have two $\brr{\beta_R, \beta_T}$ approximately-equivalent MDPs $M_1$ and $M_2$. Then, we have:
\[
\sup_s \abs{V^*_{M_1} \brr{s} - V^*_{M_2} \brr{s}} ~\leq~ \frac{\beta_R \cdot R_{\mathrm{max}}}{1 - \gamma} + \frac{\gamma \cdot \beta_T \cdot R_{\mathrm{max}}}{\brr{1 - \gamma}^2} . 
\]
\end{lemma}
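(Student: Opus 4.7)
My plan is to follow the classical simulation-lemma style argument: set $\epsilon := \sup_s |V^*_{M_1}(s) - V^*_{M_2}(s)|$ and derive a self-bounding inequality of the form $\epsilon \leq A + \gamma \epsilon$, from which the stated bound drops out by rearrangement. The two ingredients I need are (i) the Bellman optimality equations for both MDPs, and (ii) the $\brr{\beta_R, \beta_T}$-closeness of rewards and transitions.

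First I would pass from $V^*$ to $Q^*$ using the standard inequality $|\max_a f(a) - \max_a g(a)| \leq \max_a |f(a) - g(a)|$, so that
\[
|V^*_{M_1}(s) - V^*_{M_2}(s)| ~\leq~ \max_a \abs{Q^*_{M_1}(s,a) - Q^*_{M_2}(s,a)}.
\]
Then I would expand each $Q^*$ via its Bellman equation and insert the telescoping quantity $\gamma \int T_1(s'\mid s,a) V^*_{M_2}(s') ds'$. This splits the action-value gap into three pieces: a reward term bounded by $\beta_R R_\mathrm{max}$ (by hypothesis), a ``future value'' term $\gamma \int T_1(\cdot) (V^*_{M_1} - V^*_{M_2})$ bounded by $\gamma \epsilon$, and a ``model mismatch'' term $\gamma \int (T_1 - T_2) V^*_{M_2}$ which I would bound by Hölder's inequality as $\gamma \cdot \beta_T \cdot \norm{V^*_{M_2}}_\infty$. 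Using that rewards are nonnegative and bounded by $R_\mathrm{max}$, I get $\norm{V^*_{M_2}}_\infty \leq R_\mathrm{max}/(1-\gamma)$.

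Putting the three pieces together yields
\[
\epsilon ~\leq~ \beta_R R_\mathrm{max} + \gamma \epsilon + \frac{\gamma \beta_T R_\mathrm{max}}{1-\gamma},
\]
and solving for $\epsilon$ gives exactly the claimed bound $\frac{\beta_R R_\mathrm{max}}{1-\gamma} + \frac{\gamma \beta_T R_\mathrm{max}}{(1-\gamma)^2}$. In the continuous-state case the sums become integrals, but nothing else changes since $\mathcal{A}$ is finite and all integrands are bounded.

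The main thing to be careful about is the ``model mismatch'' bound: one must verify that Hölder applies with $\norm{V^*_{M_2}}_\infty$ rather than requiring centering tricks (which would shave a factor of $2$, but we do not need that here). The other mildly delicate point is that the reduction from $V^*$ to $Q^*$ via the $\max$-inequality must be done before applying $\sup_s$, so that the same state $s$ is used on both sides; this is routine. I do not expect any genuine obstacle beyond keeping the telescoping step clean.
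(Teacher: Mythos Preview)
Your proposal is correct and follows essentially the same route as the paper: pass from $V^*$ to $Q^*$ via the $\max$-inequality, expand by Bellman, telescope the transition term, bound the model-mismatch piece by $\gamma\beta_T R_{\max}/(1-\gamma)$ using $\|V^*\|_\infty\le R_{\max}/(1-\gamma)$, and close the self-bounding inequality $\epsilon \le \beta_R R_{\max} + \gamma\beta_T R_{\max}/(1-\gamma) + \gamma\epsilon$. The only cosmetic difference is that the paper telescopes with $T_2 V^*_{M_1}$ whereas you insert $T_1 V^*_{M_2}$; by symmetry this changes nothing.
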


\begin{proof}
Consider the Bellman optimality equation, for any $s$:
\[
V^*_{M} (s) ~=~ \max_a Q^*_{M} (s,a) ~=~ \max_a \bcc{R (s,a) + \gamma \int_\mathcal{S} T (s' \mid s,a) V^*_{M} (s') ds'}
\]
Then, for any $s$, we have:
\begin{align*}
& \abs{V^*_{M_1} (s) - V^*_{M_2} (s)} \\
=~& \abs{\max_a Q^*_{M_1} (s,a) - \max_{a'} Q^*_{M_2} (s,a')} \\
\leq~& \max_a \abs{Q^*_{M_1} (s,a) - Q^*_{M_2} (s,a)} \\
=~& \max_a \abs{\bs{R_1 (s,a) - R_2 (s,a)} + \gamma \int_\mathcal{S} \bs{T_1 (s' \mid s,a) V^*_{M_1} (s') - T_2 (s' \mid s,a) V^*_{M_2} (s')} ds'} \\
\leq~& \max_a \abs{R_1 (s,a) - R_2 (s,a)} + \gamma \max_a \abs{\int_\mathcal{S} \bs{T_1 (s' \mid s,a) V^*_{M_1} (s') - T_2 (s' \mid s,a) V^*_{M_2} (s')} ds'} \\
\leq~& \beta_R \cdot R_{\mathrm{max}} + \gamma \max_a \abs{\int_\mathcal{S} \bs{T_1 (s' \mid s,a) - T_2 (s' \mid s,a)} V^*_{M_1} (s') ds' - \int_\mathcal{S} T_2 (s' \mid s,a) \bs{V^*_{M_2} (s') - V^*_{M_1} (s')}ds'} \\
\leq~& \beta_R \cdot R_{\mathrm{max}} + \gamma \max_a \bcc{\abs{\int_\mathcal{S} \bs{T_1 (s' \mid s,a) - T_2 (s' \mid s,a)} V^*_{M_1} (s') ds'} + \abs{\int_\mathcal{S} T_2 (s' \mid s,a) \bs{V^*_{M_2} (s') - V^*_{M_1} (s')}ds'}} \\
\leq~& \beta_R \cdot R_{\mathrm{max}} + \gamma \max_a \bcc{\norm{T_1 (\cdot \mid s,a) - T_2 (\cdot \mid s,a)}_1 \sup_{s'} V^*_{M_1} (s') + \norm{T_2 (\cdot \mid s,a)}_1 \sup_{s'} \abs{V^*_{M_1} (s') - V^*_{M_2} (s')} } \\
\leq~& \beta_R \cdot R_{\mathrm{max}} + \gamma \max_a \bcc{\norm{T_1 (\cdot \mid s,a) - T_2 (\cdot \mid s,a)}_1 \frac{R_{\mathrm{max}}}{1-\gamma} + 1 \cdot \sup_{s'} \abs{V^*_{M_1} (s') - V^*_{M_2} (s')} } \\
\leq~& \beta_R \cdot R_{\mathrm{max}} + \frac{\gamma \cdot \beta_T \cdot R_{\mathrm{max}}}{1-\gamma} + \gamma \sup_{s'} \abs{V^*_{M_1} (s') - V^*_{M_2} (s')}  
\end{align*}
Taking sup over $s$, and rearranging the terms complete the proof. 
\end{proof}

\begin{lemma}[adapted from \cite{even2003approximate}]
\label{lemma:simulation}
Suppose we have two $\brr{\beta_R, \beta_T}$ approximately-equivalent MDPs $M_1$ and $M_2$. Then for any policy $\pi: \mathcal{S} \rightarrow \Delta\brr{\mathcal{A}}$, we have:
\[
\sup_s \abs{V^{\pi}_{M_1} \brr{s} - V^{\pi}_{M_2} \brr{s}} ~\leq~ \frac{\beta_R \cdot R_{\mathrm{max}}}{1 - \gamma} + \frac{\gamma \cdot \beta_T \cdot R_{\mathrm{max}}}{\brr{1 - \gamma}^2} . 
\]
\end{lemma}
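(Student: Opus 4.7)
The plan is to mirror the proof of Lemma~4 almost verbatim, with the Bellman optimality equation replaced by the Bellman expectation equation for the fixed policy $\pi$. Concretely, for either MDP $M_i$, I would start from
\[
V^\pi_{M_i}(s) = \sum_a \pi(a \mid s) \bcc{R_i(s,a) + \gamma \int_\mathcal{S} T_i(s' \mid s, a) V^\pi_{M_i}(s') ds'},
\]
subtract the two versions, and then follow the same add-and-subtract trick used in Lemma~4, writing $T_1 V^\pi_{M_1} - T_2 V^\pi_{M_2} = (T_1 - T_2) V^\pi_{M_1} + T_2 (V^\pi_{M_1} - V^\pi_{M_2})$ inside the integral.

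Next, I would apply the triangle inequality and bound each piece separately. The reward-difference term is bounded pointwise by $\beta_R \cdot R_{\mathrm{max}}$ from the approximate-equivalence assumption, and because $\sum_a \pi(a \mid s) = 1$ this survives the averaging over actions (this is exactly where the argument differs from Lemma~4: instead of a max over $a$, we have a convex combination, which is controlled by the same uniform bound). For the transition piece, I would use $\sup_{s'} V^\pi_{M_1}(s') \le R_{\mathrm{max}}/(1-\gamma)$ together with $\norm{T_1(\cdot \mid s,a) - T_2(\cdot \mid s,a)}_1 \le \beta_T$ to bound the first summand by $\gamma \beta_T R_{\mathrm{max}}/(1-\gamma)$, while the second summand telescopes into $\gamma \sup_{s'} \abs{V^\pi_{M_1}(s') - V^\pi_{M_2}(s')}$ since $T_2(\cdot \mid s,a)$ has total mass one.

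Putting these together yields the self-referential inequality
\[
\sup_s \abs{V^\pi_{M_1}(s) - V^\pi_{M_2}(s)} ~\leq~ \beta_R \cdot R_{\mathrm{max}} + \frac{\gamma \cdot \beta_T \cdot R_{\mathrm{max}}}{1-\gamma} + \gamma \sup_s \abs{V^\pi_{M_1}(s) - V^\pi_{M_2}(s)},
\]
from which solving for the supremum (dividing by $1-\gamma$) gives the stated bound. I do not anticipate a serious obstacle here; the only point that warrants care is verifying that replacing the max over actions with an $\pi$-expectation does not lose anything, which follows because each step of the chain of inequalities is a pointwise-in-$a$ bound that survives averaging. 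The argument is otherwise a routine copy of the Lemma~4 derivation, so I would simply emphasize this one line of adaptation and cite the rest as analogous.
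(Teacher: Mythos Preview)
Your argument is correct and is exactly the standard simulation-lemma derivation one would expect. One small labeling issue: the lemma whose proof you are mirroring is Lemma~\ref{lemma:robust-diff} (the bound on $\sup_s |V^*_{M_1}(s)-V^*_{M_2}(s)|$), not Lemma~4; the statement you are proving \emph{is} Lemma~4 in the paper's numbering.

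As for the comparison you were asked to make: the paper does not actually supply a proof of this lemma. It is stated with the attribution ``adapted from \cite{even2003approximate}'' and no argument is given. Your proposal---replacing the Bellman optimality equation by the Bellman expectation equation for the fixed $\pi$, then following the same add-and-subtract decomposition and contraction step as in the proof of Lemma~\ref{lemma:robust-diff}---is precisely the intended adaptation and goes through without difficulty. The only substantive change, as you correctly identify, is that the $\max_a$ becomes a $\pi(\cdot\mid s)$-average, and since every pointwise-in-$a$ bound you use is uniform in $a$, the convex combination inherits the same bound.
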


\begin{lemma}
\label{near-opt-lemma}
Consider an MDP $M = \brr{\mathcal{S}, \mathcal{A}, T, R, P_0, \gamma}$ with a unique deterministic optimal policy $\pi^*_M: \mathcal{S} \rightarrow \mathcal{A}$ such that the following holds:
\begin{equation}
\label{eq:use-temp-1}
V^*_M \brr{s} ~=~ Q^*_M \brr{s,\pi^*_M\brr{s}} ~\geq~ Q^*_M \brr{s,a'} + \delta , \quad \forall{a' \neq \pi^*_M\brr{s}, s \in \mathcal{S}} .
\end{equation}
Then any $\epsilon_{\mathrm{opt}}$-near optimal policy $\pi: \mathcal{S} \rightarrow \Delta \brr{\mathcal{A}}$ for $M$ (as given in \eqref{eq:near-opt-policy}) satisfies the following:
\[
\sup_s \norm{\pi^*_{{M}}\brr{\cdot \mid s} - \pi\brr{\cdot \mid s}}_1 ~\leq~ \frac{\epsilon_\mathrm{opt}}{\delta} .
\]
Further, any other deterministic policy $\pi:\mathcal{S} \rightarrow \mathcal{A}$ cannot be $\epsilon_{\mathrm{opt}}$-near optimal when $\epsilon_{\mathrm{opt}} < \delta$.
\end{lemma}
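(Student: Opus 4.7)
The plan is to exploit the suboptimality gap condition \eqref{eq:use-temp-1} to convert a bound on the value deficit into a bound on how much probability mass $\pi(\cdot\mid s)$ can place on actions other than $\pi^*_M(s)$. The key observation is that for any policy $\pi$, we have $V^\pi_M(s) \le \sum_a \pi(a\mid s)\,Q^*_M(s,a)$, since $Q^\pi_M \le Q^*_M$ pointwise. Combined with the strict gap of size $\delta$ between $Q^*_M(s,\pi^*_M(s))$ and $Q^*_M(s,a')$ for $a'\neq \pi^*_M(s)$, this forces any near-optimal $\pi$ to concentrate on the optimal action.

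\textbf{Main bound.} Fix $s$ and write $a^\star := \pi^*_M(s)$. First I would bound
\[
V^*_M(s) - V^\pi_M(s) \;\ge\; V^*_M(s) - \sum_a \pi(a\mid s) Q^*_M(s,a) \;=\; \sum_a \pi(a\mid s)\bigl[V^*_M(s) - Q^*_M(s,a)\bigr].
\]
By \eqref{eq:use-temp-1}, the bracketed term equals $0$ when $a = a^\star$ and is $\ge \delta$ when $a \neq a^\star$, so the right-hand side is at least $\delta\,(1 - \pi(a^\star\mid s))$. Invoking the near-optimality assumption $V^*_M(s) - V^\pi_M(s) \le \epsilon_\mathrm{opt}$ then yields $1 - \pi(a^\star\mid s) \le \epsilon_\mathrm{opt}/\delta$. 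Finally, because $\pi^*_M(\cdot\mid s)$ is a point mass at $a^\star$,
\[
\norm{\pi^*_M(\cdot\mid s) - \pi(\cdot\mid s)}_1 \;=\; \bigl(1 - \pi(a^\star\mid s)\bigr) + \sum_{a\neq a^\star}\pi(a\mid s) \;=\; 2\bigl(1 - \pi(a^\star\mid s)\bigr),
\]
and taking the supremum over $s$ gives the claimed inequality (up to the factor $2$; I suspect the stated bound in the lemma is meant in total-variation normalization, or else there is a harmless constant to absorb --- either way the proof structure is unchanged and the downstream use in Theorem~\ref{thm:main-change-T} still goes through modulo this constant).

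\textbf{Second statement (deterministic case).} For any deterministic policy $\pi:\mathcal{S}\to\mathcal{A}$ with $\pi \neq \pi^*_M$, pick a state $s_0$ where $\pi(s_0) \neq a^\star := \pi^*_M(s_0)$. Since $Q^\pi_M(s_0,\pi(s_0)) \le Q^*_M(s_0,\pi(s_0))$ and \eqref{eq:use-temp-1} gives $Q^*_M(s_0,\pi(s_0)) \le V^*_M(s_0) - \delta$, we get $V^\pi_M(s_0) = Q^\pi_M(s_0,\pi(s_0)) \le V^*_M(s_0) - \delta$, so $\sup_s |V^*_M(s) - V^\pi_M(s)| \ge \delta > \epsilon_\mathrm{opt}$, contradicting $\epsilon_\mathrm{opt}$-near-optimality.

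\textbf{Where I expect difficulty.} No step is genuinely hard; the only subtlety is the $V^\pi_M(s) \le \sum_a \pi(a\mid s) Q^*_M(s,a)$ inequality, which uses the elementary fact $Q^\pi_M \le Q^*_M$ rather than a performance-difference argument --- this avoids introducing the discounted occupancy measure and keeps the bound state-wise and dimension-free. The only other thing to watch is the normalization convention for $\norm{\cdot}_1$ between probability vectors, which only affects the constant in front of $\epsilon_\mathrm{opt}/\delta$ and propagates harmlessly through step $(iv)$ of the proof of Theorem~\ref{thm:main-change-T}.
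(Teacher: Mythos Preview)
Your proof is correct and follows essentially the same route as the paper's: both use the elementary inequality $V^\pi_M(s)\le \expctover{a\sim\pi(\cdot\mid s)}{Q^*_M(s,a)}$ together with the $\delta$-gap to bound $1-\pi(a^\star\mid s)$ by $\epsilon_\mathrm{opt}/\delta$, and your deterministic argument is identical (the paper's only cosmetic difference is that it routes through the second-best action $a'_s$ rather than summing directly over $a\neq a^\star$). Your suspicion about the constant is on the mark: the paper in fact writes $\norm{\pi^*_M(\cdot\mid s)-\pi(\cdot\mid s)}_1=\bar p$ with $\bar p=1-\pi(a^\star\mid s)$, i.e.\ it is implicitly using the total-variation normalization you anticipated.
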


\begin{proof}
Denote $f\brr{s,\pi\brr{s}} = \expctover{a \sim \pi\brr{\cdot \mid s}}{f\brr{s,a}}$. First, we note that for any $s \in \mathcal{S}$, we have
\begin{equation}
\label{eq:use-temp-2}
Q^*_M\brr{s,\pi\brr{s}} ~\geq~ Q^\pi_M\brr{s,\pi\brr{s}} ~=~ V^\pi_M \brr{s} . 
\end{equation}
Fix any state $s \in \mathcal{S}$. Let $a_s^* = \pi^*_M\brr{s}$, $a'_s$ be one of the second best action after $a^*_s$, and $p = \pi\brr{a_s^* \mid s}$. Then, we have
\begin{align}
Q^*_M\brr{s,\pi\brr{s}} ~=~& \expctover{a \sim \pi\brr{\cdot \mid s}}{R\brr{s,a} + \expctover{s' \sim T\brr{\cdot \mid s,a}}{V^*_M\brr{s'}}} \nonumber \\
~\leq~& p \cdot Q^*_M\brr{s,a_s^*} + \overline p \cdot Q^*_M\brr{s,a'_s} , \label{eq:use-temp-3}
\end{align}
where $\bar{p} = 1-p$. Consider
\begin{align*}
\overline p \cdot \delta ~\stackrel{(i)}{\leq}~& \overline p \cdot \bcc{Q^*_M\brr{s,a_s^*} - Q^*_M\brr{s,a'_s}} \\
~\stackrel{(ii)}{\leq}~& Q^*_M\brr{s,a_s^*} - Q^*_M\brr{s,\pi\brr{s}} \\
~\stackrel{(iii)}{\leq}~& V^*_M \brr{s} - V^\pi_M \brr{s} \\
~\leq~& \epsilon_\mathrm{opt} .
\end{align*}
where $(i)$, $(ii)$, and $(iii)$ are due to Eqs.~\eqref{eq:use-temp-1},~\eqref{eq:use-temp-3},~and~\eqref{eq:use-temp-2} respectively. Thus, we get
\[
\norm{\pi^*_{{M}}\brr{\cdot \mid s} - \pi\brr{\cdot \mid s}}_1 ~=~ \overline p ~\leq~ \frac{\epsilon_\mathrm{opt}}{\delta} .
\]
When $\epsilon_{\mathrm{opt}} < \delta$ and $\pi$ is deterministic s.t. $\pi \neq \pi^*_M$, from Eqs.~\eqref{eq:use-temp-1},~and~\eqref{eq:use-temp-2}, we have:
\[
V^*_M \brr{s} ~\geq~ V^\pi_M \brr{s} + \delta ~>~ V^\pi_M \brr{s} + \epsilon_\mathrm{opt} .
\]
\end{proof}

\paragraph{Proof of Theorem~\ref{thm:reward-only-shaping}.}

\begin{proof}
Note that $\widehat{M}$ and $\overline{M}$ are $\brr{0, \beta_T}$ approximately-equivalent. We first show that $\pi_{\widehat{M}}$ is near-optimal in $\overline{M}$, then using Theorem~\ref{thm:main-change-T} completes the proof. For any $s \in \mathcal{S}$, we have:
\begin{align*}
& \abs{V^*_{\overline M} \brr{s} - V^{\pi_{\widehat{M}}}_{\overline M} \brr{s}} \\
~\leq~& \abs{V^*_{\overline M} \brr{s} - V^*_{\widehat M} \brr{s}} + \abs{V^*_{\widehat M} \brr{s} - V^{\pi_{\widehat{M}}}_{\widehat M} \brr{s}} + \abs{V^{\pi_{\widehat{M}}}_{\widehat M} \brr{s} - V^{\pi_{\widehat{M}}}_{\overline M} \brr{s}} \\
~\leq~& \frac{\gamma \cdot \beta_T \cdot R_\phi^{\mathrm{max}}}{\brr{1-\gamma}^2} + \epsilon_\mathrm{opt} + \frac{\gamma \cdot \beta_T \cdot R_\phi^{\mathrm{max}}}{\brr{1-\gamma}^2} , 
\end{align*}
where the last inequality is due to Lemma~\ref{lemma:robust-diff} and Lemma~\ref{lemma:simulation}. 
\end{proof}

\paragraph{Proof of Theorem~\ref{thm:dynamics-only-shaping}.}

\begin{proof}
Note that $\widehat{M}$ and $\overline{M}$ are $\brr{\beta_R , 0}$ approximately-equivalent. We first show that $\pi_{\widehat{M}}$ is near-optimal in $\overline{M}$, then using Theorem~\ref{thm:main-change-T} completes the proof. For any $s \in \mathcal{S}$, we have:
\begin{align*}
& \abs{V^*_{\overline M} \brr{s} - V^{\pi_{\widehat{M}}}_{\overline M} \brr{s}} \\
~\leq~& \abs{V^*_{\overline M} \brr{s} - V^*_{\widehat M} \brr{s}} + \abs{V^*_{\widehat M} \brr{s} - V^{\pi_{\widehat{M}}}_{\widehat M} \brr{s}} + \abs{V^{\pi_{\widehat{M}}}_{\widehat M} \brr{s} - V^{\pi_{\widehat{M}}}_{\overline M} \brr{s}} \\
~\leq~& \frac{\beta_R \cdot \max \bcc{R_\mathrm{max}, R_\phi^{\mathrm{max}}}}{{1-\gamma}} + \epsilon_\mathrm{opt} + \frac{\beta_R \cdot \max \bcc{R_\mathrm{max}, R_\phi^{\mathrm{max}}}}{{1-\gamma}} , 
\end{align*}
where the last inequality is due to Lemma~\ref{lemma:robust-diff} and Lemma~\ref{lemma:simulation}. 
\end{proof}

\section{Additional Experiments}\label{sec.appendix.experiments}

\begin{figure*}[t!]
	\begin{subfigure}{0.45\linewidth}
		\includegraphics[width=1.15\linewidth]{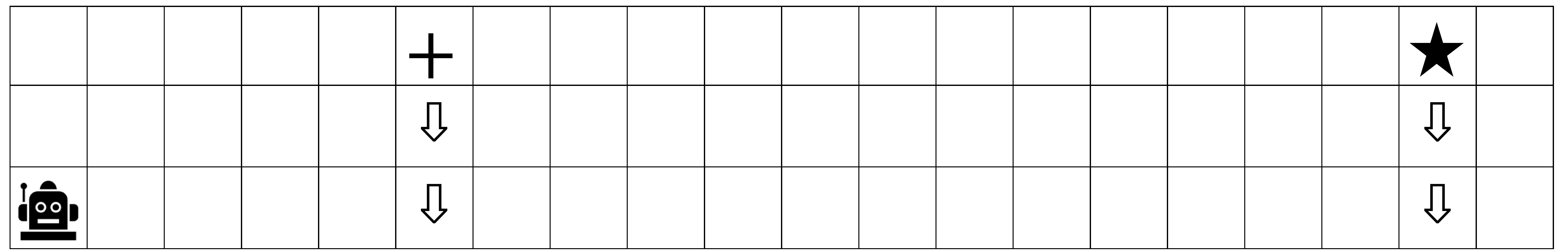}
    	\caption{Environment}
        \label{fig:2d.env}
	\end{subfigure}
	\quad \quad \quad
	\begin{subfigure}{0.45\linewidth}
		\includegraphics[height=.8\linewidth]{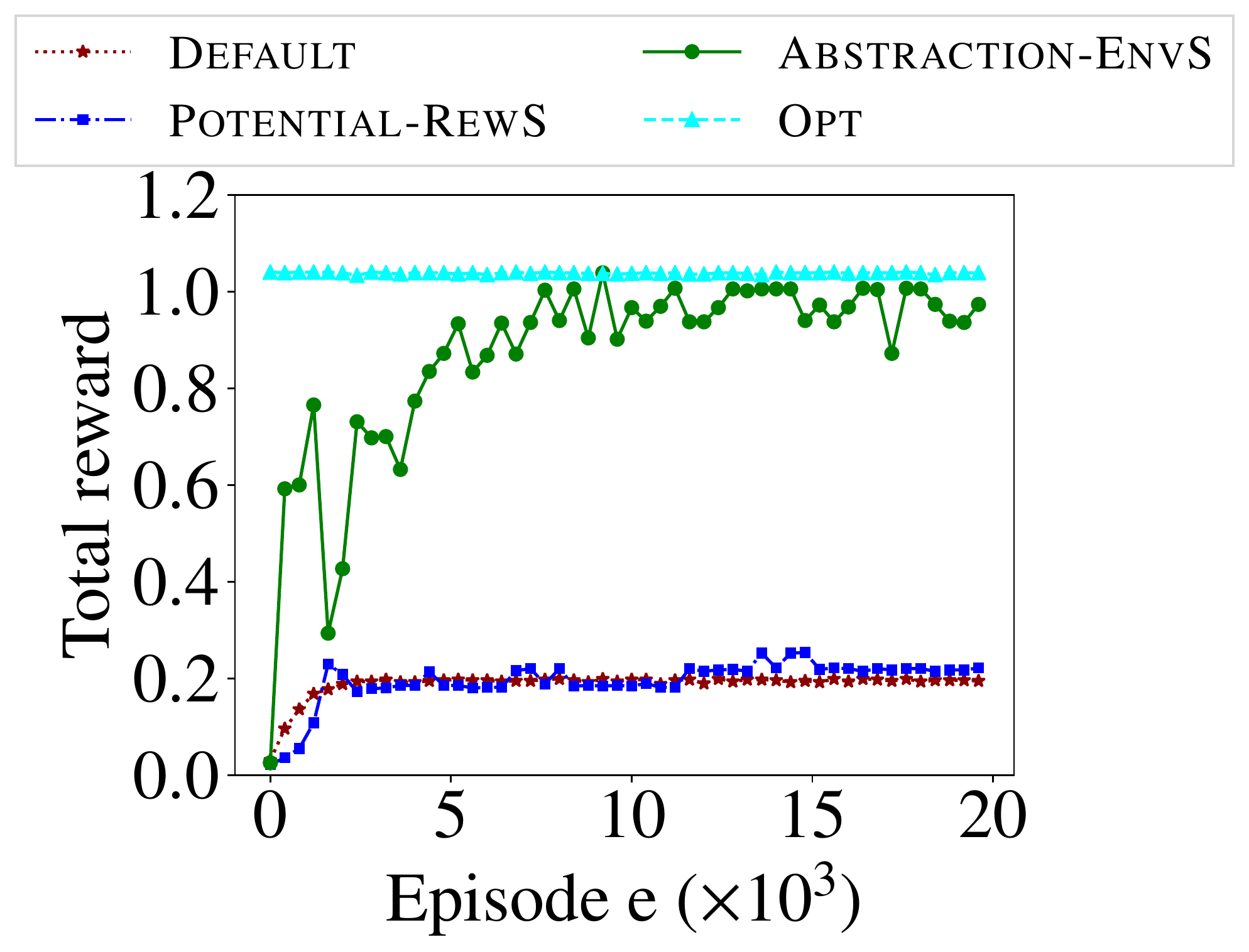}
		\vspace{-4mm}
    	\caption{Performance}
        \label{fig:results.2}
	\end{subfigure}
	\caption{(a) An example of the \textit{catcher} environment with the actions $\mathcal{A}$=\{"left", "right"\} and the state space size $|\mathcal{S}|=20\times60^2$. (b) Learning curves for \textit{catcher} task, averaged across 30 runs.}
\end{figure*}

In this section, we performed an additional experiment to showcase the efficacy of our algorithm.


\subsection{Environment Details}
We consider the \textit{catcher} environment illustrated in Figure~\ref{fig:2d.env}. In this game, an agent moves along the x-axis, and two types of objects fall from above the agent, perpendicular to the agent's axis of movement. In this game, the agent has two actions, i.e., $\mathcal{A}$=\{``left", ``right"\}. The action taken succeeds in moving the agent in the corresponding direction with a probability of $0.99$. And with a probability of $0.01$, the movement happens in the opposite direction. The agent starts from the left corner of the environment, and its goal is to move itself to catch the falling objects. The object ``plus" always appears close to the agent's initial position, e.g., on the $5^\mathrm{th}$ column, and the object ``star" appears before the last column. Intersecting with the object ``star" gives the reward of $1.0$, and intersecting with the object ``plus" gives the reward of $0.04$. After catching or missing the objects, they appear in the starting position. The game's episode ends when the maximum time step $H=30$ is reached, or the agent has caught the ``star" object. Since we have the number of the cells to be $60$ and our state representation is described as the concatenation of the binary vector of the position of the agent and the locations of each object, it results in $20\times60^2$ total possible states, i.e., $|\mathcal{S}|=20\times60^2$. The rewards are discounted at $\gamma=0.99$.  The optimal policy $\pi^*$ is to catch the ``star"  object.




\subsection{Experimental Setup}

We use the vanilla policy gradient method as the agent's algorithm, where the policy network is implemented as a two-layer neural network with \texttt{relu} activation function and \texttt{softmax} on the last layer. A one-hot vector of length $20$, corresponding to the number of cells in the x-axis, represents the agent's location. One-hot vectors of length $60$, corresponding to the number of cells, represent the positions of all two objects in the environment. Hence the length of the input vector to the neural network is $20+2\times60=140$.


\paragraph{Methods evaluated.} 

As in Section~\ref{sec:experiments}, the following algorithms were considered in the comparison: (i) default baseline with training on original $M$ (\textsc{Default}), (ii) baseline of potential-based reward shaping of $M$  (\textsc{Potential-RewS}) (see \cite{ng1999policy}), (iii) training on $\overline{M}$ shaped using our approach (\textsc{Abstraction-EnvS}), and (iv) optimal policy that catches object ``star" (\textsc{Opt}). 

For the \textsc{Potential-RewS} approach, we need to approximate the optimal value function  $V_{M}^{*}$ of the original MDP $M$. To this end, we first run the policy gradient algorithm up to $50,000$ iterations, and then used resulting policy $\pi$ to approximate $V_{M}^{*}$ by $V_M^{\pi}$. To estimate the value function of the policy $\pi$, we used Monte Carlo (MC) prediction algorithm with $200,000$ samples. Given that we have domain knowledge of our problem, e.g., the information about the agent's position and ``star" object's position is sufficient to learn the optimal behavior, our abstraction $\phi:\mathcal{S}\rightarrow \mathcal{X}_\phi$ can be defined as follows: we consider only the position of the agent and position of the ``star" object to encode our state space. This abstraction reduces the state-space from $20\times60^2$ down to $20\times60$, i.e.,  $|\mathcal{X}_\phi| = 20\times60$. We construct the MDP $M_\phi$ by first obtaining ${\widetilde{M}}_{\phi,\brr{2,1}}$ and then using the rewards only attack as in \cite{rakhsha2020policy}. 

We evaluate the performance of the four algorithms mentioned above and report the results by averaging across $30$ runs and with random initialization for each run (\emph{cf.}, Figure~\ref{fig:results.2}). 

}

\end{document}